\documentclass{article}

\usepackage[preprint]{neurips_2026}

\usepackage{hyperref}

\usepackage{amsmath}
\usepackage{amssymb}
\usepackage{mathtools}
\usepackage{amsthm, thm-restate}
\usepackage{algorithm}
\usepackage{comment}
\usepackage{subcaption}
\usepackage{booktabs}
\usepackage{xcolor}

\usepackage{amsmath,amssymb}
\usepackage{algorithm}
\usepackage{algpseudocode}

\usepackage{enumitem}

\usepackage{xspace}
\makeatletter
\DeclareRobustCommand\onedot{\futurelet\@let@token\@onedot}
\def\@onedot{\ifx\@let@token.\else.\null\fi\xspace}

\def\eg{{e.g}\onedot}

\makeatother

\usepackage[capitalize,noabbrev]{cleveref}

\theoremstyle{plain}
\newtheorem{theorem}{Theorem} 
\newtheorem{proposition}[theorem]{Proposition}
\newtheorem{lemma}[theorem]{Lemma}
\newtheorem{corollary}[theorem]{Corollary}
\theoremstyle{definition}
\newtheorem{definition}{Definition}   

\theoremstyle{remark}

\newcommand{\method}{DP-$\lambda$CGD\xspace}

\title{\method: Efficient Noise Correlation for 
  Differentially Private Model Training}

\author{%
  Nikita P. Kalinin\\
  Institute of Science and Technology Austria \\
  Klosterneuburg, Austria \\
  \texttt{nikita.kalinin@ist.ac.at}
  \And
  Ryan McKenna \\
  Google \\
  Seattle, US\\
  \texttt{mckennar@google.com}
  \AND
  Rasmus Pagh \\
  University of Copenhagen \\
  Copenhagen, Denmark \\
  \texttt{pagh@di.ku.dk}
  \And
  Christoph H. Lampert \\
  Institute of Science and Technology Austria \\
  Klosterneuburg, Austria\\
  \texttt{chl@ist.ac.at}
}

\begin{document}

\maketitle

\begin{abstract}
Differentially private stochastic gradient descent (DP-SGD) is the gold standard for training machine learning models with formal differential privacy guarantees. Several recent extensions improve its accuracy by introducing correlated noise across training iterations. Matrix factorization mechanisms are a prominent example, but they correlate noise across many iterations and require storing previously added noise vectors, leading to substantial memory overhead in some settings. In this work, we propose a new noise correlation strategy that correlates noise only with the immediately preceding iteration and cancels a controlled portion of it. Our method relies on noise regeneration using a pseudorandom noise generator, eliminating the need to store past noise. As a result, it requires no additional memory beyond standard DP-SGD. We show that the computational overhead is minimal and empirically demonstrate improved accuracy over DP-SGD.
\end{abstract}

\section{Introduction}
Protecting individuals’ privacy is a fundamental constraint for algorithms that interact with humans or are trained on personal data. The disclosure of sensitive information can harm both users and the reputation of organizations deploying such algorithms. In machine learning, this challenge is particularly acute as the demand for non-public and user-generated data continues to grow. To mitigate these risks, \textit{Differential Privacy} \citep{Dwork_DP_original} has emerged as a mathematically formal framework for privacy protection. By introducing noise into algorithms or model training procedures, it guarantees protection for individuals whose data are used, while still enabling meaningful utility for the resulting models.

The most well-studied approach for introducing differential privacy into model training is differentially private stochastic gradient descent (DP-SGD) \citep{Abadi}. DP-SGD clips per-example gradients to bound the influence of any single data point and then adds independent Gaussian noise to ensure privacy. One of the principal ways to improve the utility of this method is to introduce correlations in the noise across training iterations. Mechanisms that correlate noise in this manner are generally referred to as matrix factorization mechanisms \citep{Li_MF, Denisov, choquette2022multi, choquette2023amplified, kalinin2024, kalinin2025learningrate}. Such mechanisms have been successfully applied in real-world systems, including large-scale deployments such as Google Gboard \citep{xu2024advances}. 

Matrix factorization mechanisms, however, have inherent limitations. To introduce correlations across iterations, noise from previous steps must be accessible and is therefore typically stored. Because each noise vector has the same dimensionality as the (trainable) model parameters, this can lead to prohibitive memory overhead in some settings. For example, the original work of \citet{choquette2022multi} on multi-epoch training required storing as many noise vectors as there are iterations. Subsequent work by \citet{scalingmckenna2024} showed that the memory overhead of matrix factorization can be reduced; in particular, in multi-GPU settings with large batch sizes, the time overhead is negligible. However, for larger models trained on fewer GPUs, the overhead is still substantial. More recently, \citet{kalinin2025back} demonstrated that improvements over DP-SGD can be achieved by storing as few as four noise vectors. In contrast, we push this line of work to its logical limit by eliminating the need for any additional memory. Our method leverages noise regeneration: since noise is almost always produced by a pseudo-random generator, previously used noise vectors can be deterministically regenerated when needed rather than stored explicitly.

Another important question concerns how to correlate the noise, or more precisely, which matrix yields the best performance. The most commonly used metric for evaluating the quality of a factorization is the root mean squared error (RMSE) \citep{Denisov, choquette2022multi, scalingmckenna2024, kalinin2025back}. Maximum squared error (MaxSE) has been proposed as an alternative and is more commonly used in settings such as single-participation or item-level privacy \citep{henzinger2023almost, henzinger2025improved, henzinger2025normalized, andersson2024streaming, jacobsen2025private}. However, it has been observed that optimizing for these metrics does not perfectly predict the utility of the resulting factorization \citep{kalinin2024, kalinin2025back}, and this mismatch is identified as a major open problem in matrix factorization mechanisms in a recent survey \citep{pillutla2025correlated}.

Our proposed method enables continuous interpolation between a trivial factorization corresponding to standard DP-SGD and factorizations that are optimal with respect to RMSE or MaxSE, controlled by a single hyperparameter. This flexibility allows us to fully explore the space of factorizations within the considered matrix class and to systematically study the problem of optimal matrix factorization. Our results reaffirm prior findings \citep{koloskova2023gradient, choquette2023correlated, gu2025correlating} that the optimal factorization is problem- and loss-dependent, and cannot be determined solely from the matrix structure and training configuration. Consequently, we treat the optimal parametrization of our mechanism as a single tunable hyperparameter, which can be lightly adjusted to match the specific learning task.

\paragraph{Contribution}
We introduce \method, a noise correlation mechanism that requires no additional memory and incurs negligible time overhead compared to DP-SGD.  We perform an exhaustive empirical search over the class induced by \method, parameterized by a single scalar $\lambda$, and demonstrate that RMSE and MaxSE do not necessarily predict downstream accuracy. We theoretically study RMSE-optimal matrix factorizations and prove several structural properties.
 We empirically evaluate \method and demonstrate that, despite its simplicity, it surpasses DP-SGD and most matrix-factorization approaches.

\section{Background}

We use \emph{differential privacy}~\citep{Dwork_DP_original} as our privacy notion and recall its definition below.

\begin{definition}[Differential Privacy (DP) \citep{Dwork_DP_original}]
A randomized algorithm $M$ is said to provide $(\epsilon, \delta)$-differential privacy if, for all datasets $D$ and $D'$ differing in at most one element, and for all measurable subsets $S$ of the mechanism's output space:
\begin{equation}
\Pr[M(D) \in S] \leq e^{\epsilon} \cdot \Pr[M(D') \in S] + \delta.
\end{equation}
\end{definition}

To ensure that a model is differentially private, the most common approach is to use differentially private stochastic gradient descent (DP-SGD)~\citep{Abadi}. At each iteration $t$, a batch of data $B_t$ is sampled, and gradients
$g_i^t$ are computed for all $i \in [1, |B_t|]$. These gradients are then clipped to a fixed
$\ell_2$-norm bound $\zeta > 0$ and averaged to obtain the clipped gradient
$\hat{g}_t$. Finally, scaled Gaussian noise $z_t$ is added to ensure that the resulting
update satisfies a prescribed level of privacy.

The utility of this mechanism can be improved by correlating the noise added across
iterations. Intuitively, one may add larger noise at each iteration while choosing the
correlation so that components of previously added noise cancel over time. As a result,
the accumulated noise can have smaller variance, improving model accuracy. A formal way
to introduce linear noise correlation is via a \emph{correlation matrix}\footnote{We 
consistently refer to the \emph{correlation} matrix as $C^{-1}$, and to $C$, which determines 
the noise scale, as the \emph{strategy} matrix.}
$C^{-1}$~\citep{Li_MF, Denisov, choquette2022multi}. If we stack all gradient vectors into
a matrix $G$ and all noise vectors into a matrix $Z$, then instead of adding $Z$
directly to $G$, we add correlated noise $C^{-1}Z$ and obtain $G + C^{-1}Z$.
By post-processing invariance, this is equivalent to adding noise $Z$ to the transformed
gradients $CG$. Ignoring privacy amplification due to subsampling for now, the amount of
noise required by a matrix factorization mechanism is proportional to the
\textit{sensitivity} of $CG$:
\begin{equation}
    \mathrm{sens}(C)
    \;=\;
    \sup_{G \sim G'} \|CG - CG'\|_F,
\end{equation}
where $G$ and $G'$ differ in all positions corresponding to the participation of a single
data point.

Following the framework of \citep{choquette2023amplified}, we consider a multi-epoch training setting in which each data point can participate up to $k$ times, under the restriction that the time gap between consecutive participations is at least $b$. This framework can be directly transferred to the federated learning setting, where each user participates at most $k$ times, and we do not include a user’s input unless we receive sufficient input from the rest of the users.

For the remainder of this work, we restrict attention to positive lower triangular
Toeplitz (LLT) strategy matrices. Under the above $b$-min-separation condition, the
sensitivity of such a matrix can be computed using the following result.
\begin{theorem}[Theorem~2 from \citet{kalinin2024}]
\label{thm:sensitivity}
Let $C$ be a lower triangular Toeplitz matrix with decreasing non-negative entries
$c_0 \ge c_1 \ge \cdots \ge c_{n-1} \ge 0$.
Then, under $b$-min-separation, the sensitivity is given by
\begin{equation}
    \mathrm{sens}_{k,b}(C)
    = \Big\| \sum_{j=0}^{k-1} C_{[:,\,jb + 1]} \Big\|_2,
\end{equation}
where $C_{[:,\,jb + 1]}$ denotes the $(1+jb)$-th column of $C$.
\end{theorem}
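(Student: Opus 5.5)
Throughout I take the participation pattern of a single example to be a set of indices $\pi = \{i_1 < \dots < i_m\}$ with $m \le k$ and $i_{l+1} - i_l \ge b$. The difference $G - G'$ is supported on the rows in $\pi$, each row having norm at most one after clipping. Writing $G-G' = \sum_{l} e_{i_l} x_l^\top$ with $\|x_l\|\le 1$ gives
\begin{equation*}
\|C(G-G')\|_F^2 = \sum_{l,l'} \langle C_{[:,i_l]}, C_{[:,i_{l'}]}\rangle \langle x_l, x_{l'}\rangle .
\end{equation*}
Because $C$ has non-negative entries, every column inner product is non-negative, and $\langle x_l,x_{l'}\rangle \le 1$. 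So the supremum over the $x_l$ is attained by taking all $x_l$ equal to a common unit vector. The sensitivity then reduces to
\begin{equation*}
\max_{\pi} \Big\| \sum_{l} C_{[:,i_l]} \Big\|_2 .
\end{equation*}
Since the columns are non-negative, adding participations never decreases this norm. Hence we may assume $m = k$, provided $n$ permits it; if not, we take as many indices as fit.

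The main step is to show that the maximizing pattern is $i_l = 1 + (l-1)b$: start as early as possible, with gaps exactly $b$. By the Toeplitz structure, column $i$ is the vector $(0,\dots,0,c_0,c_1,\dots,c_{n-i})$, which is $c$ shifted down by $i-1$ and truncated. I would prove two monotonicity facts via an exchange argument, with the squared norm expanded as $\sum_{l,l'} \langle C_{[:,i_l]}, C_{[:,i_{l'}]}\rangle$.

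\begin{enumerate}
\item For two columns $i<j$, the inner product is $\langle C_{[:,i]}, C_{[:,j]}\rangle = \sum_{t=0}^{n-j} c_t\, c_{t+j-i}$. This is non-increasing in the gap $j-i$, since $c$ is decreasing and non-negative, so each term $c_{t+d}$ decreases in $d$. It is also non-increasing in $j$ for a fixed gap, since fewer terms survive the truncation.
\item Now shift all participations jointly. Replace $i_l$ by $i_l' = 1 + (l-1)b$. Then $i_l' \le i_l$ for every $l$, and every pairwise gap satisfies $i_{l'}' - i_l' = (l'-l)b \le i_{l'} - i_l$. So each pairwise inner product, including the diagonal terms $\|C_{[:,i]}\|^2 = \sum_{t\le n-i} c_t^2$, weakly increases. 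The squared norm therefore weakly increases, and the canonical pattern attains the maximum.
\end{enumerate}

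Both facts should be stated for the truncated sums with care about index ranges. Since all terms are non-negative, dropping trailing terms only decreases the sum, so the bookkeeping is straightforward.

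The main obstacle I expect is the first reduction: justifying that the worst case over gradient differences aligns all rows with a common direction. One must argue that the sup over $x_l$ in the unit ball is attained at $x_l = x$ for all $l$. This follows because the Gram form has non-negative coefficients, so $\langle x_l, x_{l'}\rangle \le 1$ with equality when all the $x_l$ coincide. A second point to handle is the convention that patterns with fewer than $k$ participations are included. Monotonicity under adding columns covers this case.
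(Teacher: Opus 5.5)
Your proposal is correct and follows essentially the same route as the argument the paper relies on. That argument is cited from \citet{kalinin2024} and repeated in the proof of Lemma~\ref{lem:normalized-sensitivity}: use non-negativity to reduce the sensitivity to $\max_{\pi}\|\sum_{i\in\pi} C_{[:,i]}\|_2$ over $b$-separated index sets, then use monotonicity of column inner products in the gap and in the position to show that $\{1,1+b,\dots,1+(k-1)b\}$ is optimal. The differences are minor: you derive the sum-of-columns formula directly instead of citing \citet{choquette2023amplified}, and you compare an arbitrary pattern to the canonical one termwise in a single step, since every position and pairwise gap is dominated, rather than shifting columns left iteratively.
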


To achieve a target privacy level $(\epsilon,\delta)$, we add Gaussian noise whose scale
is proportional to the sensitivity, multiplied by the Gaussian mechanism noise multiplier (the
multiplier can be computed numerically; see \citet{balle2018improving}). With this
calibration, using correlated noise induced by $C$ attains the desired privacy
guarantee. The main remaining question is therefore how to choose the correlation (or
strategy) matrix $C$. This turns out to be a subtle and decidedly nontrivial problem,
and it is the core focus of the matrix factorization literature; see the recent survey
of \citet{pillutla2025correlated}.

As an efficiently computable proxy for solution quality, we use the Root Mean Squared
Error (RMSE) and the Maximum Squared Error (MaxSE) between the true and noisy sequences
of intermediate models.
Let $G \in \mathbb{R}^{n \times d}$ be the matrix whose $t$-th row is the (vectorized)
gradient at iteration $t$. Let $A \in \mathbb{R}^{n \times n}$ be the \emph{prefix-sum}
matrix, i.e., the lower-triangular matrix of ones,
\begin{equation}
A \;=\;
\begin{pmatrix}
1      & 0            & \cdots & 0 \\
1      & 1            & \cdots & 0 \\
\vdots & \vdots  & \ddots & 0 \\
1      & 1      &\cdots      & 1
\end{pmatrix}.
\end{equation}
Then $AG$ stacks the cumulative sums of gradients, and therefore corresponds (up to the
learning-rate factor and initialization) to the sequence of intermediate model updates.

Rather than estimating $AG$ directly, we form a noisy estimate by first perturbing the
per-iteration gradients with correlated noise and then taking prefix sums:
\begin{equation}
    \widehat{AG} = A(G + C^{-1}Z) = AG + AC^{-1}Z,
\end{equation}
where $Z \in \mathbb{R}^{n \times d}$ has i.i.d.\ standard Gaussian entries.

Natural measures of estimation quality include the expected squared Frobenius error
$\mathbb{E}_{Z}\!\left[\|AC^{-1}Z\|_F^2\right]$ and the worst-case (over iterations) expected
squared row error $\max_{i \in [n]} \mathbb{E}_{Z}\!\left[\|(AC^{-1}Z)_{i,:}\|_2^2\right]$.
Let the left factor be $B := AC^{-1}$. For each error measure, we drop multiplicative constants that are
independent of the choice of factorization $(B,C)$ (e.g., dimensionality $d$, clipping norm $\zeta$, noise multiplier $\sigma_{\epsilon, \delta}$), and work with the following equivalent objectives:
\begin{equation}
    \mathrm{RMSE}(B, C)
    \;:=\;
    \tfrac{1}{\sqrt{n}}\,\|B\|_F \cdot \mathrm{sens}_{k,b}(C), \qquad 
    \mathrm{MaxSE}(B, C)
    \;:=\;
    \|B\|_{2 \to \infty} \cdot \mathrm{sens}_{k,b}(C),
\end{equation}
where $\|B\|_{2 \to \infty} := \max_{i \in [n]} \|B_{i,:}\|_2$.

Matrix Factorization mechanisms, as well as DP-SGD, can benefit from privacy amplification. The idea is that each training batch is selected at random (via shuffling or a more intricate scheme), creating uncertainty for an adversary about whether a particular data point was included in a given batch. With appropriate analysis, via so called privacy accountants, this ambiguity allows one to reduce the amount of noise added to gradients while maintaining the same privacy guarantees.

In this work, we use Balls in Bins subsampling and the corresponding accountant for matrix factorization \citep{choquette2024near}. In this subsampling scheme, one first fixes the number of batches in an epoch and then randomly allocates each data point to one of the batches. The resulting allocation is reused across epochs, providing computational efficiency. See Subsection~\ref{sub:amplification} for more details on amplification.

\section{Differentially Private Correlated Gradient Descent (\method)}

In this paper, we propose \method, a matrix factorization and noise-correlation technique
that improves upon the utility of DP-SGD while remaining nearly as fast, requiring no
additional memory. Formally, we correlate the noise using a lower-triangular Toeplitz
strategy matrix $C_\lambda$ and its inverse (the corresponding correlation matrix)
$C_\lambda^{-1}$, defined entrywise as follows. For $i,j \in [n]$,
\begin{align}
\label{eq:C_lambda}
    (C_\lambda)_{ij}
    \;:=\;
    \begin{cases}
        \lambda^{\,i-j}, & i \ge j,\\
        0, & i < j,
    \end{cases}
    \qquad(C_\lambda^{-1})_{ij}
    \;:=\;
    \begin{cases}
        1, & i=j,\\
        -\lambda, & i=j+1,\\
        0, & \text{otherwise}.
    \end{cases}
\end{align}
Correlating the noise using $C_{\lambda}^{-1}$ is equivalent to subtracting a
$\lambda$-fraction of the noise added at the previous iteration (up to an overall
scaling determined by $\mathrm{sens}_{k,b}(C_{\lambda})$). 
Due to its simplicity, this correlation matrix has appeared as an exemplary factorization in several papers. It is referred to as ``Anti-PGD + damping'' in \citet{choquette2023correlated}, and it is used to illustrate noise correlation in the recent survey of \citet{pillutla2025correlated}. It also arises as a special case of the BandInvMF factorization in \citet{kalinin2025back}. \emph{However, it has not been studied beyond single-participation RMSE error, nor has it been used for practical DP training.}

To make our method memory-free, we observe that at each step we only need the noise
vector added in the previous iteration. As in essentially all practical private learning
pipelines, this noise is generated by a pseudo-random number generator (PRNG). Thus, by
storing the PRNG state from the previous iteration, we can roll back to that state,
regenerate the previous noise for cancellation, and then generate fresh noise exactly as
DP-SGD would. The drawback is that we must regenerate an additional noise vector each iteration, which
could introduce overhead. Empirically, however, this cost is negligible compared to the
rest of training (e.g., gradient computation): across multiple settings and model
families, \method runs within $1\%$ of the runtime of DP-SGD. See
Section~\ref{sec:noise_regeneration} for details.

Correlated noise can be combined with privacy amplification by subsampling. We use the
Balls-in-Bins subsampling scheme and its corresponding accountant \citep{choquette2024near}
to compute the resulting privacy guarantees. This allows improved utility at the same
target privacy level. For further discussion of the role of subsampling, see
Subsection~\ref{sub:amplification}.

Putting these ideas together, we outline the full procedure in
Algorithm~\ref{alg:method}. The algorithm takes $\lambda \in [0,1)$ as a hyperparameter; we discuss how to choose $\lambda$ in Section~\ref{sec:choice_of_lambda}.

We establish several properties of the \method with respect to the RMSE and MaxSE error
metrics. We begin by analyzing the MaxSE error, which is characterized in the following
lemma.

\begin{restatable}{lemma}{MaxSEBound}
\label{lem:MaxSEBound}
The MaxSE of \method, optimized over $\lambda \in [0,1)$, satisfies
\begin{equation}
    \!\!\inf_{\lambda \in [0,1)}\!\!
    \|A C_{\lambda}^{-1}\|_{2 \to \infty}
    \cdot \mathrm{sens}_{k,b}(C_{\lambda})
    = O\!\big(k + \sqrt{k}\, n^{1/4}\big) 
\end{equation}
\end{restatable}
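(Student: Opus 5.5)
We bound the two factors separately for a fixed $\lambda\in[0,1)$, multiply, and then choose $\lambda$ as a function of $n$ and $k$.

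\emph{Left factor.} Write $B = A C_\lambda^{-1}$. Since $C_\lambda^{-1} = I - \lambda S$, where $S$ is the down-shift matrix, row $i$ of $B$ has entry $1-\lambda$ in columns $1,\dots,i-1$ and entry $1$ in column $i$. Hence
\[
\|B_{i,:}\|_2^2 = 1 + (i-1)(1-\lambda)^2 .
\]
The maximum over $i$ is attained at $i=n$, so $\|AC_\lambda^{-1}\|_{2\to\infty} \le 1 + \sqrt{n}\,(1-\lambda)$.

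\emph{Sensitivity.} The entries $\lambda^{i}$ are non-negative and decreasing, so Theorem~\ref{thm:sensitivity} applies. With $c_m = \lambda^m$ and the convention $c_m = 0$ for $m<0$, the vector $v = \sum_{j=0}^{k-1} C_{[:,jb+1]}$ has coordinates $v_t = \sum_{j} c_{t-1-jb}$. By the triangle inequality,
\[
\|v\|_2 \le \sum_{j=0}^{k-1} \|C_{[:,jb+1]}\|_2 \le k \Big(\sum_{m\ge 0}\lambda^{2m}\Big)^{1/2} = \frac{k}{\sqrt{1-\lambda^2}} \le \frac{k}{\sqrt{1-\lambda}} .
\]
This crude bound suffices and does not depend on $b$.

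\emph{Combining and choosing $\lambda$.} Set $\mu = 1-\lambda \in (0,1]$. The product is then at most
\[
\frac{k\,(1+\sqrt{n}\,\mu)}{\sqrt{\mu}} = k\Big(\mu^{-1/2} + \sqrt{n}\,\mu^{1/2}\Big).
\]
Choosing $\mu = n^{-1/2}$ gives $2k\,n^{1/4}$. This is a valid upper bound, but it exceeds the claimed $O(k+\sqrt{k}\,n^{1/4})$ when $k$ is large. So the crude triangle-inequality bound on the sensitivity is the main obstacle, and I would refine it as follows.

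Expand the square:
\[
\|v\|_2^2 = \sum_{j,j'} \langle C_{[:,jb+1]}, C_{[:,j'b+1]}\rangle .
\]
For columns whose indices differ by $|j-j'|\,b$, the inner product is at most $\lambda^{|j-j'|b}/(1-\lambda^2)$. Using $b\ge 1$, this gives
\[
\|v\|_2^2 \le \frac{k}{1-\lambda^2}\sum_{s\ge0} 2\lambda^{s} \lesssim \frac{k}{\mu}\,\min\!\Big(\frac{1}{\mu}, k\Big).
\]
The first choice in the minimum uses the geometric series; the second uses the fact that there are at most $k$ terms.

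Now split into two regimes.
\begin{itemize}
\item If $\mu \ge 1/k$, the sensitivity is $O(\sqrt{k}/\mu)$. Since $\|B\|_{2\to\infty}\le 1+\sqrt{n}\,\mu$, the product is $O(\sqrt{k}/\mu + \sqrt{kn})$. Because $\mu\le 1$, the first term always dominates $\sqrt{k}$, so this regime alone cannot beat $\sqrt{kn}$. The point is to combine it with the other bound.
\item Otherwise, use the bound $O(k/\sqrt{\mu})$ together with the row norm $O(1+\sqrt{n}\,\mu)$.
\end{itemize}
The plan is to optimize these two regimes against each other. A natural choice is $\mu = \max(1/k,\, n^{-1/2})$, or more precisely $\mu$ balancing $\sqrt{n}\,\mu$ against $1$. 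When $n \le k^2$, take $\mu = 1/k$: the sensitivity is $O(k)$ and the row factor is $O(1+\sqrt{n}/k) = O(1)$, giving $O(k)$. When $n > k^2$, take $\mu = n^{-1/2}$: the row factor is $O(1)$ and the refined sensitivity is $O(\sqrt{k/\mu}\cdot\min(\mu^{-1/2},\sqrt{k})) = O(\sqrt{k}\cdot\sqrt{k}\,n^{1/4})$. That is still $k\,n^{1/4}$, so at this choice one must instead use the $b$-separation, as explained next.

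The key extra ingredient I expect to need is that $b \approx n/k$. The paper states $k$ participations with separation $b$, so presumably $kb\le n$ and typically $b = n/k$. Then the cross terms are of size $\lambda^{b} = (1-\mu)^{n/k}$, and for $\mu=n^{-1/2}$ these sum to $O(1 + k/(\mu n)) $ per column. This gives $\|v\|_2^2 = O\big(k\mu^{-1}(1 + k/(\sqrt{n}))\big)$. Hence the sensitivity is $O(\sqrt{k}\,n^{1/4} + k)$, and the total is $O(k+\sqrt{k}\,n^{1/4})$, as claimed. Verifying this cross-term geometric sum carefully using $b\ge n/k$ is the step I expect to be delicate.
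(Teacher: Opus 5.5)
Your closing computation is correct, but the branch ``$n\le k^2$, take $\mu=1/k$'' is false and must be dropped. At $\mu=1/k$ your own refined bound gives $\|v\|_2^2\lesssim \frac{k}{\mu}\min(\frac1\mu,k)=k^3$. That is a sensitivity of $O(k^{3/2})$, not $O(k)$.

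This is not slack in the bound. With $b=n/k\le k$ and $\lambda=1-1/k$ one has $\mathrm{sens}_{k,b}^2(C_\lambda)=\Theta(k^3/b)=\Theta(k^4/n)$. For example, with $b=1$, $n=k$ and $\mathbf 1$ the all-ones vector, this is $\|C_\lambda\mathbf 1\|_2^2=\sum_{t\le k}\bigl(\tfrac{1-\lambda^t}{1-\lambda}\bigr)^2=\Theta(k^3)$. The row factor is $\Theta(1)$, so the product is $\Theta(k^2/\sqrt n)$, which is not $O(k)$ unless $n=\Theta(k^2)$. The choice $\mu=1/k$ is simply too small, and replacing $b$ by $1$ in the cross terms hides this.

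Fortunately your last paragraph needs no case split. With $n=kb$, $\lambda=1-\mu$ and $\mu=n^{-1/2}$,
\[
\|v\|_2^2\le\frac{1}{1-\lambda^2}\sum_{j,j'}\lambda^{|j-j'|b}\le\frac{2k}{(1-\lambda^2)(1-\lambda^b)},\qquad 1-\lambda^2\ge\mu,\qquad 1-\lambda^b\ge 1-e^{-b\mu}\ge(1-e^{-1})\min(1,b\mu).
\]
Since $b\mu=\sqrt n/k$, this gives $\|v\|_2^2=O(k\sqrt n+k^2)$. Together with $\|AC_\lambda^{-1}\|_{2\to\infty}\le\sqrt 2$, the product is $O(k+\sqrt k\,n^{1/4})$ for every $k$, with no regime split. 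The step you expected to be delicate is just this elementary inequality.

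The paper proceeds differently. It computes the sensitivity in closed form,
\[
\mathrm{sens}_{k,b}^2(C_\lambda)=\frac{1-\lambda^{2b}}{(1-\lambda^2)(1-\lambda^b)^2}\sum_{j=1}^k(1-\lambda^{bj})^2 ,
\]
also assuming $n=kb$. It then splits into two regimes: $k\ge\sqrt n$ with $\lambda=e^{-1/b}$, giving $O(k)$, and $k<\sqrt n$ with $\lambda=e^{-1/\sqrt n}$, giving $O(\sqrt k\,n^{1/4})$.

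Your Gram-expansion route needs only the upper bound $\langle C_{:,p},C_{:,q}\rangle\le\lambda^{q-p}/(1-\lambda^2)$. Once corrected, it shows that the single choice $\lambda=1-n^{-1/2}$ (essentially the paper's second-regime choice) already works in both regimes, which is slightly simpler. The exact formula buys tight asymptotics, for instance confirming that $\mu=1/k$ genuinely fails, but it is not needed for the $O(\cdot)$ statement.
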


Since RMSE is upper bounded by MaxSE for any factorization, the same bound also applies to
the RMSE of \method. The $O(n^{1/4})$ term for the single-participation setting, as well as the optimal factorization error $O(k+\sqrt{k}\log n)$, were derived in \citet{kalinin2025back}. Here, we extend the factorization result to the multi-participation regime. Our bound improves over the trivial (i.e., DP-SGD) factorization, which yields error $O(\sqrt{nk})$, though it remains weaker than the optimal factorization error. Despite this, we will show that improved RMSE does not always translate into better model
performance. In particular, in certain regimes, \method outperforms factorizations with
smaller RMSE.

The \method is the first framework that allows us to establish structural properties of
optimal factorizations. In particular, we show that the minimizer of the MaxSE objective is
always larger than the minimizer of the RMSE objective, which makes RMSE a more practically
relevant metric. We discuss this phenomenon in detail in
Section~\ref{sec:choice_of_lambda}.

\begin{restatable}{lemma}{MaxSEvsRMSEMinimizerInequality}
\label{lem:method_minimizer_inequality}
For \method, a minimizer of MaxSE is always at least as large as a minimizer of RMSE.
\end{restatable}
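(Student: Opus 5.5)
The plan is a monotone comparative-statics (Topkis-style) argument. The point is that both objectives share the sensitivity factor and differ only through a one-parameter family of left-factor norms. First I compute $B_\lambda := A C_\lambda^{-1}$ explicitly. Column $j$ of $C_\lambda^{-1}$ has a $1$ in row $j$ and $-\lambda$ in row $j+1$. Taking prefix sums therefore gives $(B_\lambda)_{ij} = 1$ for $i=j$, $(B_\lambda)_{ij} = 1-\lambda$ for $j<i$, and $0$ otherwise. Hence the squared norm of row $i$ is $1+(i-1)(1-\lambda)^2$. Since $\lambda<1$, this is nondecreasing in $i$, so
\[
\|B_\lambda\|_{2\to\infty}^2 = 1+(n-1)(1-\lambda)^2, \qquad \tfrac1n\|B_\lambda\|_F^2 = 1+\tfrac{n-1}{2}(1-\lambda)^2 .
\]
Writing $s(\lambda) := \mathrm{sens}_{k,b}(C_\lambda) > 0$ (Theorem~\ref{thm:sensitivity} applies, since the entries $\lambda^{m}$ are nonnegative and decreasing), both squared objectives take the form $F_a(\lambda) = \big(1+a(1-\lambda)^2\big)s(\lambda)^2$. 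Here $a_M = n-1$ for MaxSE and $a_R = (n-1)/2$ for RMSE. Squaring and taking logarithms does not change the minimizers.

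The key observation is that the log-ratio
\[
D(\lambda) := \log F_{a_M}(\lambda) - \log F_{a_R}(\lambda) = \log\frac{1+a_M u^2}{1+a_R u^2}, \qquad u = 1-\lambda,
\]
does not involve $s(\lambda)$ at all. Because $a_M>a_R$ when $n\ge 2$, the ratio $(1+a_Mu^2)/(1+a_Ru^2)$ is strictly increasing in $u\ge 0$. Its derivative has numerator $2u(a_M-a_R)>0$ for $u>0$. So $D$ is strictly decreasing in $\lambda$ on $[0,1)$. The case $n=1$ is trivial, since then both objectives coincide.

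Now take any minimizer $\lambda_M$ of MaxSE and any minimizer $\lambda_R$ of RMSE, and suppose for contradiction that $\lambda_M<\lambda_R$. Optimality gives $\log F_{a_M}(\lambda_M) \le \log F_{a_M}(\lambda_R)$ and $\log F_{a_R}(\lambda_R)\le \log F_{a_R}(\lambda_M)$. Adding the two inequalities and cancelling the common $\log s^2$ terms yields $D(\lambda_M) \le D(\lambda_R)$. This contradicts the strict decrease of $D$. Hence $\lambda_M \ge \lambda_R$, and in fact every MaxSE minimizer is at least every RMSE minimizer, which handles non-uniqueness.

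The only step needing care is whether minimizers exist on the half-open interval $[0,1)$. The statement is phrased about "a minimizer," so I would simply assume minimizers exist. If needed, I would justify this by noting that $s(\lambda)$ grows like $\Theta\big(k/(1-\lambda)\big)$-ish as $\lambda\to 1$, so the objectives blow up there or are otherwise non-improving. The main obstacle is therefore not the inequality itself but making this existence claim rigorous. The comparison argument is independent of any fine structure of $s(\lambda)$.
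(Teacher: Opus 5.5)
Your argument is correct and is essentially the paper's proof. The paper writes both squared objectives as $F_\gamma(\lambda)=\bigl(1+\gamma(n-1)(1-\lambda)^2\bigr)\mathrm{sens}^2(C_\lambda)$ with $\gamma\in\{1,\tfrac12\}$, shows that the ratio $F_1/F_{1/2}$ (rather than its logarithm) is strictly decreasing in $\lambda$, and reaches the same contradiction from the two optimality inequalities.

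One correction to your existence aside: $\mathrm{sens}_{k,b}(C_\lambda)$ does not blow up as $\lambda\to 1$, because for fixed $n$ we have $C_\lambda\to A$, which has finite sensitivity. Minimizers in $[0,1)$ still exist. Each objective is a polynomial on $[0,1]$, and its derivative at $\lambda=1$ equals $(s^2)'(1)>0$, since $s^2$ is a nonconstant polynomial with nonnegative coefficients when $n\ge 2$. Hence the minimum over $[0,1]$ is attained at some point of $[0,1)$.
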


\section{Noise Regeneration}
\label{sec:noise_regeneration}

\begin{figure}[t]
    \centering

    \begin{subfigure}[t]{0.48\linewidth}
        \centering
        \includegraphics[width=\linewidth]{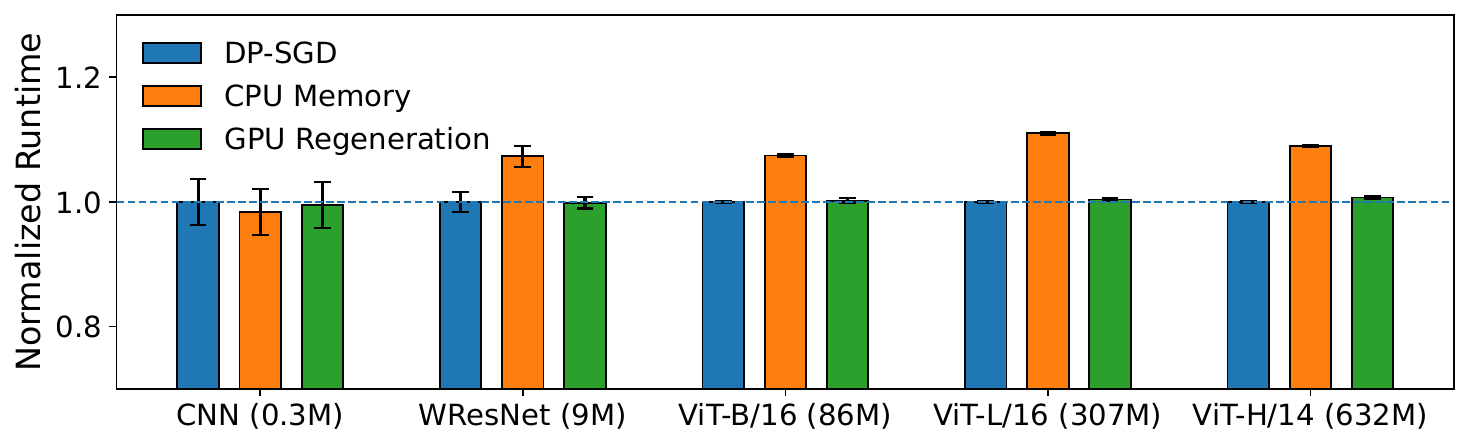}
        \caption{CIFAR-10 runtime for one epoch.}
        \label{fig:time_comparison_cifar}
    \end{subfigure}
    \hfill
    \begin{subfigure}[t]{0.48\linewidth}
        \centering
        \includegraphics[width=\linewidth]{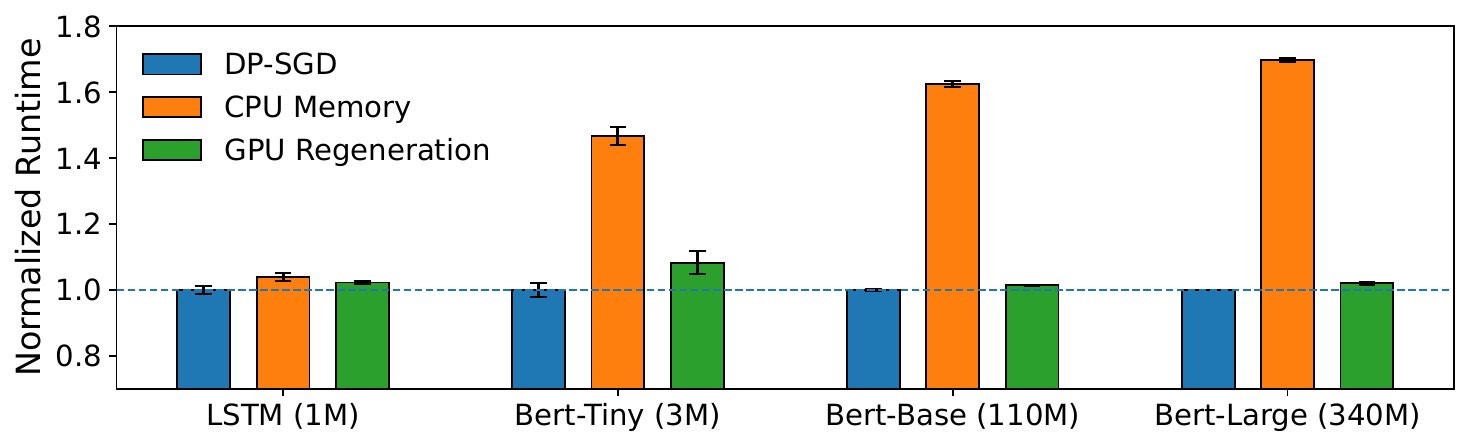}
        \caption{IMDB fine-tuning runtime for one epoch.}
        \label{fig:time_comparison_imdb}
    \end{subfigure}

    \caption{Runtime for one epoch, comparing DP-SGD runtime using the Opacus implementation versus storing the noise on the CPU and regenerating it on the GPU using pseudorandomness.}
    \label{fig:time_comparison}
\end{figure}

Modern private machine learning uses pseudo-random number generators (PRNGs) to generate noise.
A PRNG has a small \emph{state} that determines its next output as well as its next state.
The state is typically initialized to a random ``seed'' value using hardware-supported random number generation, and the whole sequence of random values is determined by the initial state.
So-called \emph{counter-based} PRNGs have a fixed state given by a seed $s$ and take a counter $i$ as input to generate the $i$th random value (that is, it implements a hash function from counters to random values).
PyTorch uses the counter-based {\tt Philox4x32-10} pseudorandom number generator~\cite{salmon2011parallel} on CUDA devices, which deterministically maps a 64-bit seed and a 128-bit counter to random values, enabling reproducible and scalable parallel random number generation on GPUs.

\paragraph{Regeneration.}
Previous approaches to noise correlation  required storing noise vectors generated in earlier steps, which can require substantial memory that may not be available on the GPU. 
Alternatively, one can store the noise on the CPU and transfer it to the GPU, but this incurs substantial overhead.
By storing a state of the PRNG it is possible to go back and ``replay'' any part of the pseudorandom sequence, trading the space for the time needed to regenerate randomness.
This approach was believed to be impractical because previous factorizations had non-banded correlation matrices and required regenerating the entire noise sequence; see the discussion in \citet{kim2025cocoon}. We also note that the reference code of \citet{choquette2022multi} implements such “on-the-fly” noise.\footnote{\url{https://github.com/google-research/federated/blob/master/dp_matrix_factorization/matrix_factorization_query.py\#L190}}
 However, their implementation reconstructs the entire noise sequence from the first iteration at each step, which they described as computationally expensive.
 
Our proposed \method, as well as Banded Inverse Matrix Factorization and Banded Inverse Square Root with small bandwidth \citep{kalinin2025back}, needs to access only a small number of the most recent noise vectors. 
This makes it efficient to regenerate noise on the fly without storing the history. Note that banded methods such as Banded Matrix Factorization \citep{scalingmckenna2024} and Buffered Linear Toeplitz \citep{dvijotham2024efficient, hasslefree2024} can correlate noise using a relatively small in-memory buffer; however, at each step the correlated noise depends on components originating from the very first iteration. Regenerating the full sequence at every step would be impractical (see Table~\ref{tab:noise_time_buffer} for the time and space requirements).

\textbf{Discussion.}
The random values generated by Philox and other popular high-performance PRNGs used in machine learning frameworks pass statistical tests of randomness but are not cryptographically secure. This potentially weakens the privacy protection, but since the de-facto standard is to not use slower, cryptographically secure PRNGs~\cite{egan2024highspeed} we will follow this standard.
Using slower PRNGs may shift the trade-off between the cost of storing noise values and regenerating them.

\textbf{Evaluation.}
We evaluate GPU-based noise regeneration against DP-SGD, as well as against an alternative that stores and correlates noise on the CPU. In Fig.~\ref{fig:time_comparison}, we report the relative time per epoch on CIFAR-10 with batch size $512$ for model sizes ranging from a small CNN with $0.3$M parameters to ViT-H/14 with $632$M parameters \citep{dosovitskiy2021vit}. We also evaluate on the IMDB sentiment analysis task using an LSTM and a family of BERT models \citep{devlin2019bert}, with a smaller batch size of $128$. Our results indicate almost no overhead when running \method compared to DP-SGD (less than $1\%$ on CIFAR-10 and $2\%$ on IMDB). By contrast, when noise is stored on the CPU and transferred to the GPU, the overhead increases to nearly $10\%$ in the CIFAR-10 experiments and $70\%$ on IMDB.

\begin{figure}[t]
    \centering
    
    \begin{subfigure}{0.32\linewidth}
        \centering
        \includegraphics[width=\linewidth]{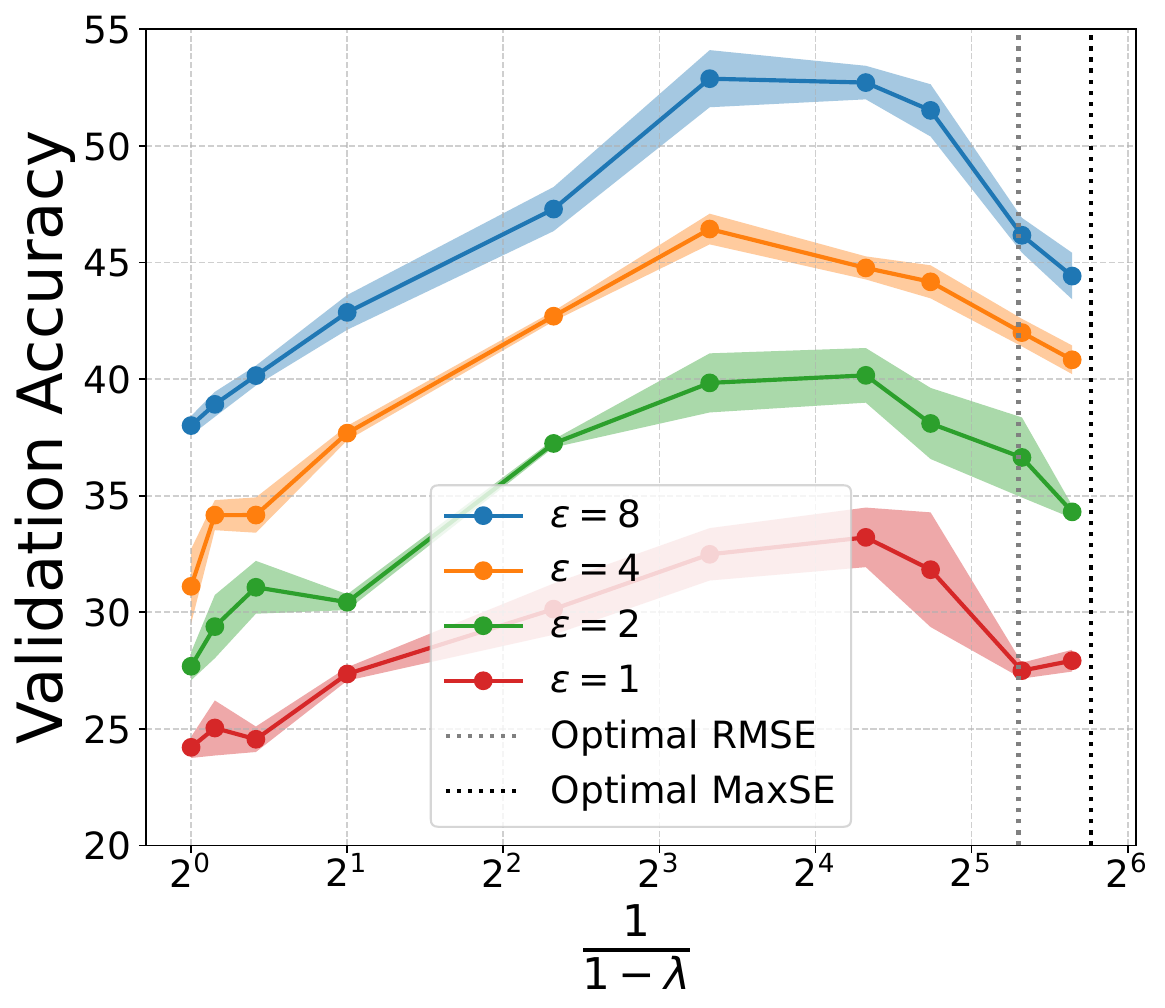}
        \caption{CIFAR-10: $B= 128$, $k =10$}
    \end{subfigure}
    \begin{subfigure}{0.32\linewidth}
        \centering
        \includegraphics[width=\linewidth]{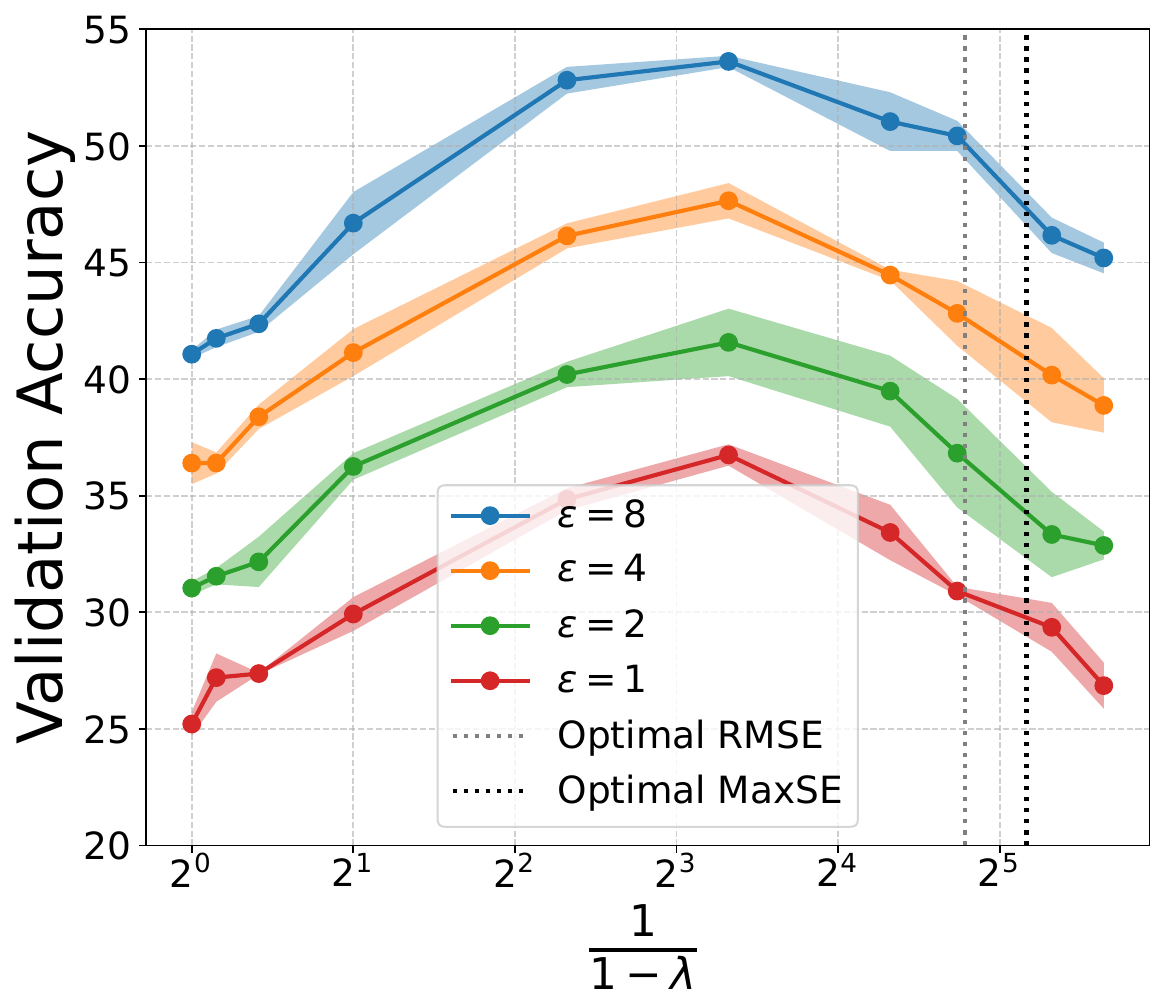}
        \caption{CIFAR-10: $B=256$, $k=10$}
    \end{subfigure}
    \begin{subfigure}{0.32\linewidth}
        \centering
        \includegraphics[width=\linewidth]{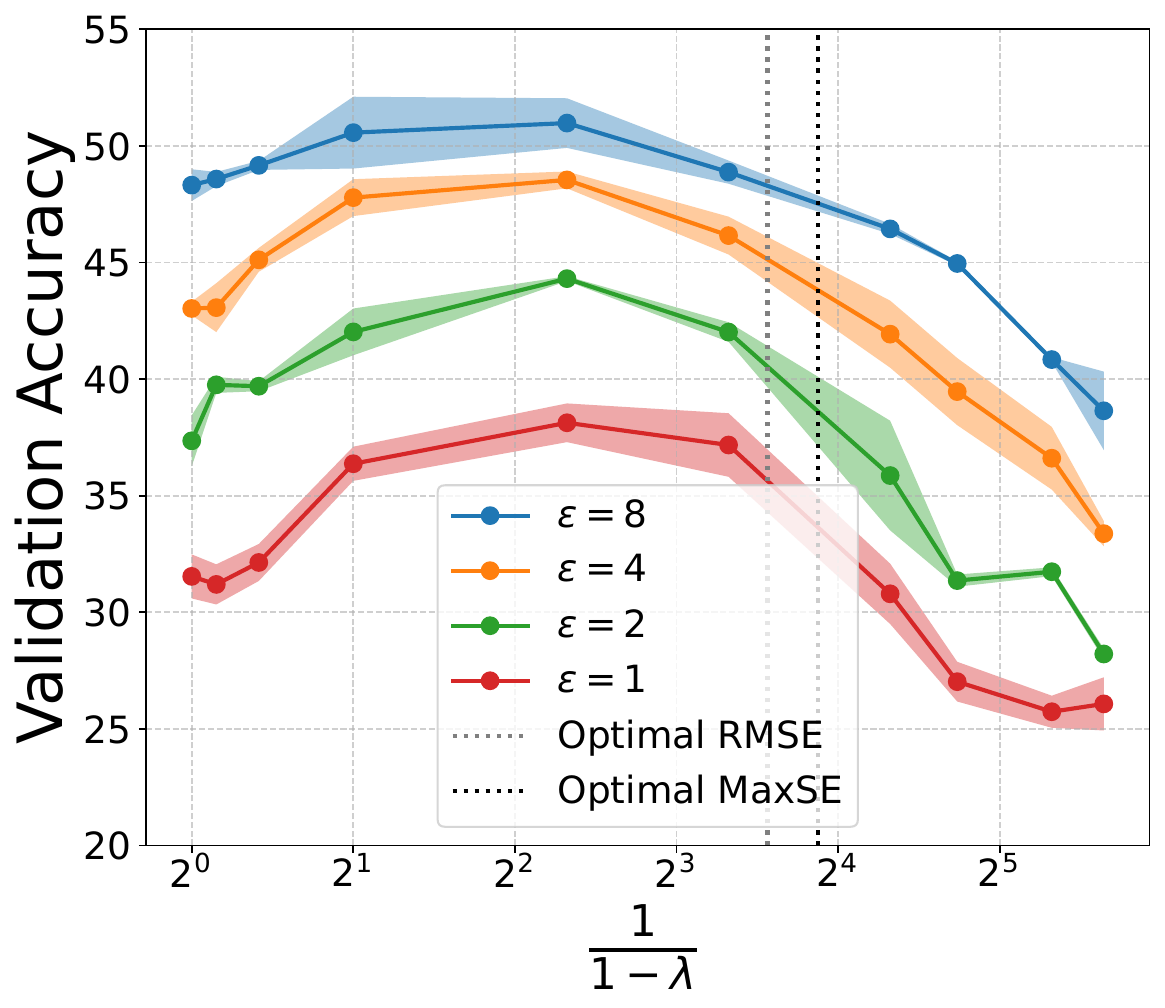}
        \caption{CIFAR-10: $B=1024$, $k=10$}
    \end{subfigure}

    \begin{subfigure}{0.32\linewidth}
        \centering
        \includegraphics[width=\linewidth]{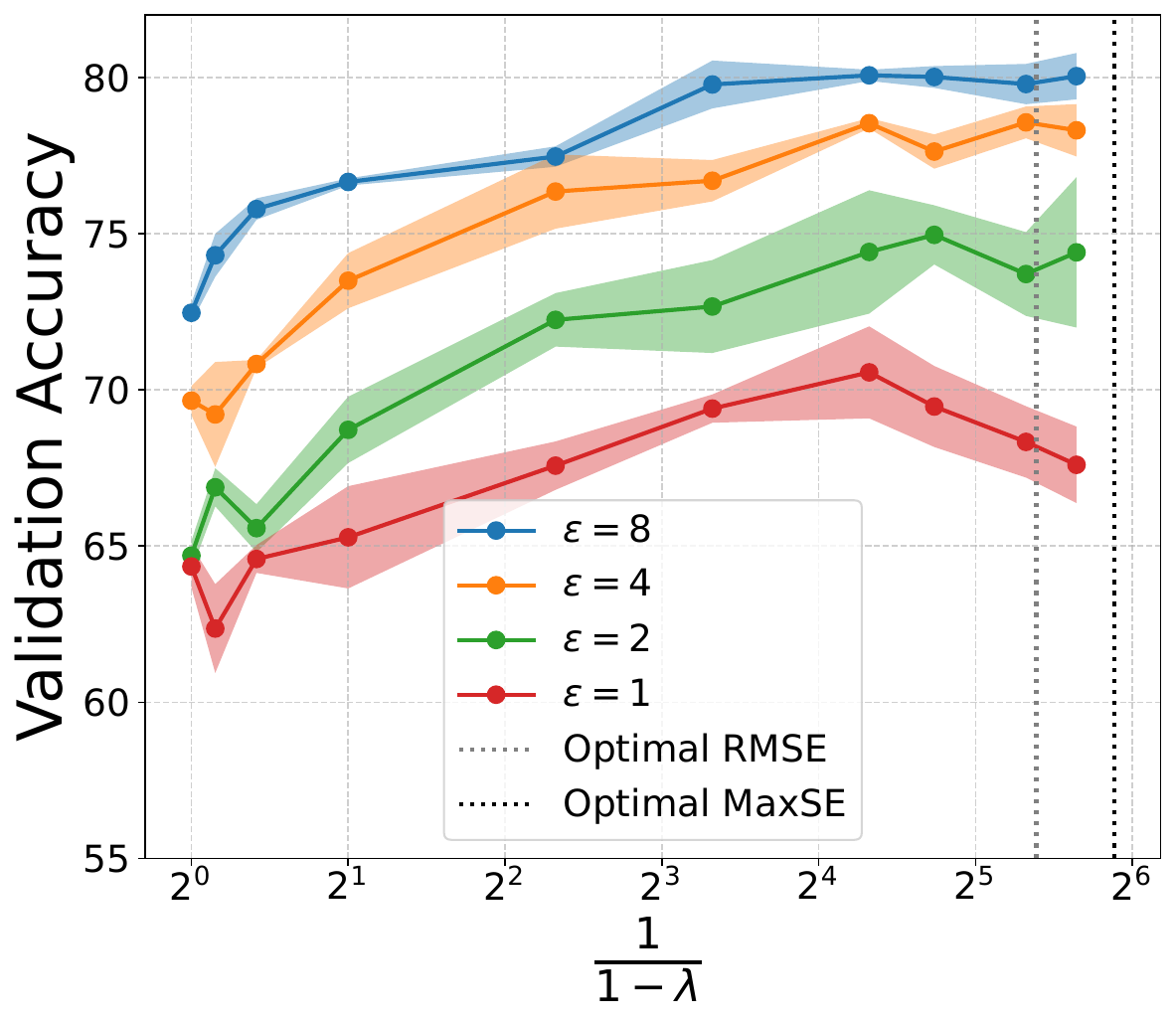}
        \caption{IMDB: $B=64$, $k=10$}
    \end{subfigure}
    \begin{subfigure}{0.32\linewidth}
        \centering
        \includegraphics[width=\linewidth]{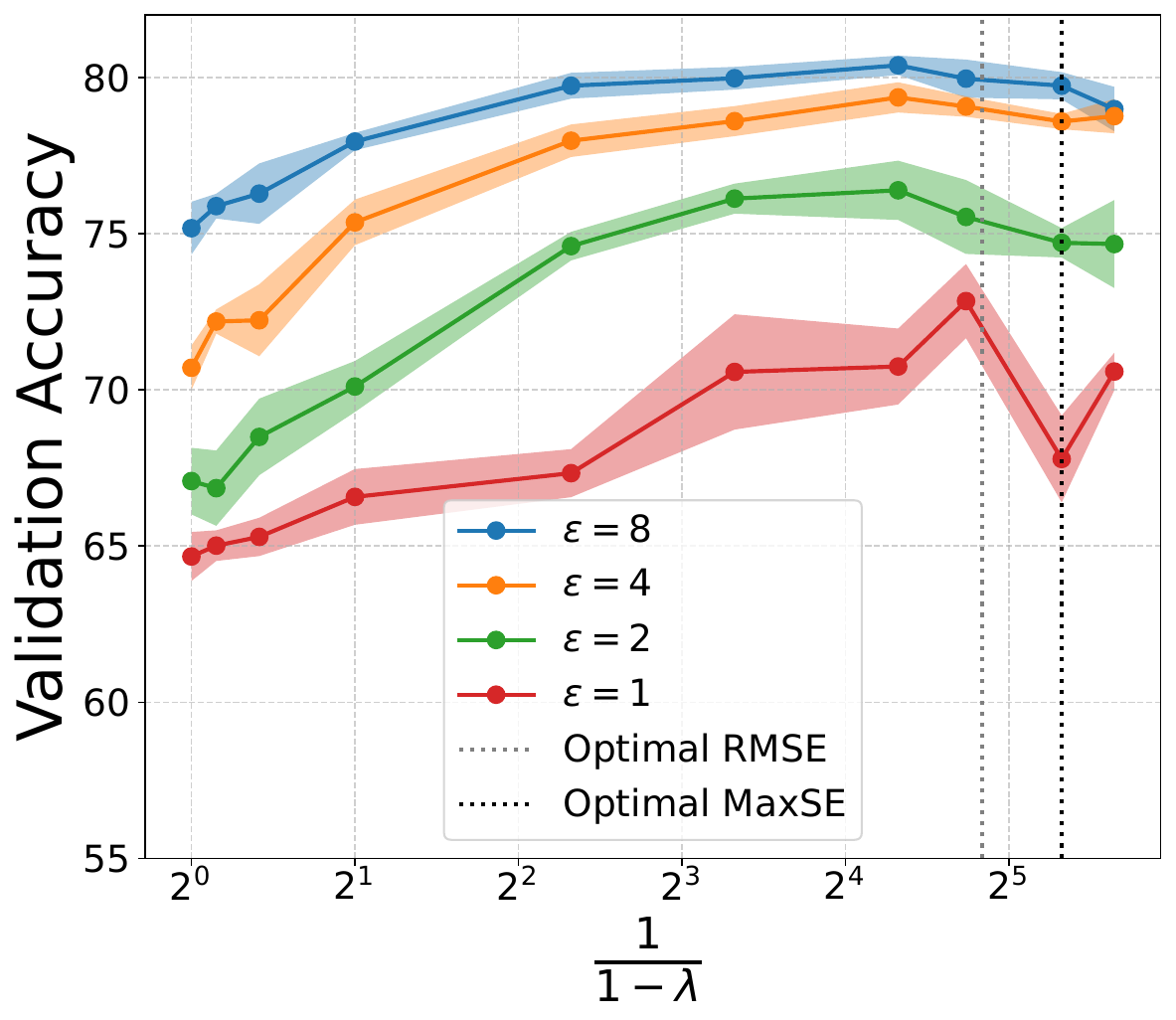}
        \caption{IMDB: $B=128$, $k=10$}
    \end{subfigure}
    \begin{subfigure}{0.32\linewidth}
        \centering
        \includegraphics[width=\linewidth]{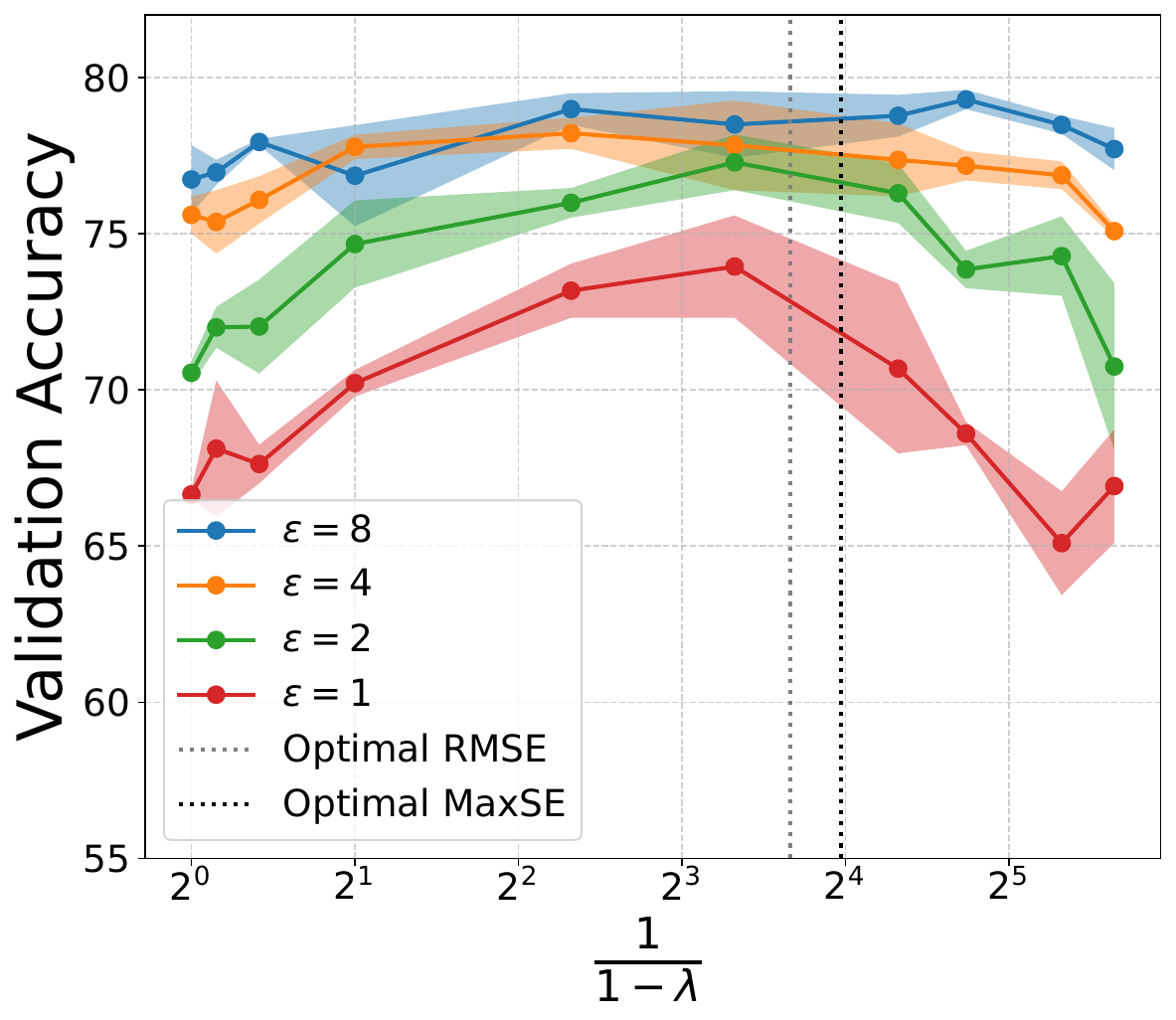}
        \caption{IMDB: $B=512$, $k=10$}
    \end{subfigure}

    \caption{ Validation accuracy for different values of the parameter $\lambda$ in \method.
The first row shows a CNN model trained on CIFAR-10 for $k=10$ epochs, using different batch sizes $B$.
The second row shows fine-tuning of the BERT-tiny model on the IMDB sentiment analysis dataset.
For each point, we tune the learning rate; error bars are computed based on three runs.
The lower and upper plots correspond to almost the same training configuration in terms of $n$, $k$, $B$, and the correlation matrix used, but demonstrate different qualitative behavior. }
    \label{fig:choice_of_lambda}
\end{figure}

In Tables~\ref{tab:runtime_comparison} and~\ref{tab:runtime_comparison_imdb} in the appendix, we report the absolute runtime for different bandwidths, where bandwidth $2$ corresponds to \method and larger bandwidths correspond to Banded Inverse Matrix Factorization \citep{kalinin2025back}. 
The tables show that regenerating a small noise buffer ($p=4$ or $p=16$) increased the cost by at most $8\%$, whereas CPU storage increased runtime by about $147\%$.

\section{How to choose the value $\lambda$?}
\label{sec:choice_of_lambda}

The best value of $\lambda$ for \method can depend on the batch size, the number of
epochs, the target privacy level, whether amplification by subsampling is used, and,
ultimately, the learning problem (\eg, the loss function and data distribution). 
To study how to choose this parameter in practice, we report results of an extensive sweep over $\lambda$ in Figure~\ref{fig:choice_of_lambda}.

Figure~\ref{fig:choice_of_lambda} reports results \emph{without} amplification, which
allows us to characterize the behavior of RMSE- and MaxSE-optimal factorizations
theoretically; for the role of subsampling, see Subsection~\ref{sub:amplification}.
We make three empirical observations. First, the best choice of $\lambda$ appears to be
task-dependent: under similar training settings and identical matrices, the optimal
$\lambda$ differs substantially between training a CNN on CIFAR-10 and fine-tuning
BERT-tiny on IMDB. In particular, for CIFAR-10 the optimal $\lambda$ is much smaller
than the value suggested by the RMSE-optimal factorization. Moreover, as shown in Lemma~\ref{lem:method_minimizer_inequality}, the minimizer of the MaxSE objective is always larger than the minimizer of the RMSE objective. Consequently, MaxSE is farther from the optimum and is therefore a less suitable objective for \method. Second, the
optimal $\lambda$ is relatively insensitive to the privacy level across the range we
tested. Third, as the batch size increases, the RMSE-optimal factorization approaches
DP-SGD. However, Figure~\ref{fig:experiments_fixed_batch_size} in the appendix shows that, for a fixed non-full batch size, increasing the number of epochs pushes the optimal mechanisms further away from DP-SGD (e.g. the optimal $\lambda$ increases). This suggests that even in long training runs, common in high-performance DP-SGD training, the benefit of matrix factorization, and \method in particular, continues to grow.

In the rest of this subsection, we prove several theoretical results on optimal factorizations in the full-batch regime. In the limit, DP-SGD is an optimal factorization in a variety of settings, which suggests that, for a fixed number of epochs, larger batch sizes lead to smaller optimal values of $\lambda$. To theoretically understand the dependence on the batch size, we develop a theory for the
multi-participation error in the full-batch setting, corresponding to $b = 1$ and $k = n$.
The optimal factorization results of \citet{kalinin2025back} imply that, asymptotically,
DP-SGD (i.e., the trivial factorization) is optimal in this regime, with RMSE scaling as
$O(n)$. Here, we go beyond asymptotic analysis and study constant-optimal factorizations.

We first show that DP-SGD is \textbf{not} constant-optimal under the RMSE metric. In
particular, there exists the optimal diagonal factorization with weights $(n - j + 1)^{1/4}$ that
achieves a smaller error constant among diagonal factorizations. This improvement is obtained by injecting larger noise
into later estimates, which may be undesirable in practice (see Remark 4.10 from \citet{pillutla2025correlated}) since model training typically
prioritizes higher accuracy for the final model. Both the trivial and weighted diagonal
factorizations are within $23\%$ of the lower bound that we establish, with the weighted
diagonal factorization improving upon DP-SGD by approximately $6\%$. See the following
theorem for the formal statement.

\begin{theorem}[Constant-optimal bounds for multi-participation error, full-batch regime]
\label{thm:full_batch_constants}
Let $b = 1$ and $k = n$, and let $A \in \mathbb{R}^{n \times n}$ be the prefix-sum matrix.

The trivial factorization $C = I$ yields
\begin{equation}
    \mathrm{RMSE}(A, I)
    \;=\;
    \sqrt{\frac{n(n+1)}{2}}
    \;\sim\;
    \frac{n}{\sqrt{2}}.
\end{equation}
This error is approximately $6\%$ larger than that of the optimal diagonal factorization
$C_{\mathrm{diag}} = \mathrm{diag}\!\big( (n - j + 1)^{1/4} \big)_{j=1}^n$, for which
\begin{equation}
    \mathrm{RMSE}(A C^{-1}_{\mathrm{diag}}, C_{\mathrm{diag}})
    \;=\;
    \frac{1}{\sqrt{n}} \sum_{j=1}^n \sqrt{j}
    \;\sim\;
    \frac{2n}{3}.
\end{equation}

Moreover, for any factorization $A = BC$, the following lower bound holds:
\begin{equation}
    \mathrm{RMSE}(B, C)
    \;\ge\;
    \sqrt{\frac{(n+1)(2n+1)}{6}}
    \;\sim\;
    \frac{n}{\sqrt{3}}.
\end{equation}
\end{theorem}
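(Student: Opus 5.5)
The plan is to use one structural fact throughout. When $b=1$ and $k=n$, the only admissible participation pattern puts a single data point in every round. So $CG-CG'$ equals $C\Delta$, where $\Delta$ has rows $\Delta_t$ with $\|\Delta_t\|_2\le 1$ (after normalizing the clipping constant).

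\textbf{Sensitivity.} Choosing every $\Delta_t$ equal to a fixed unit vector $u$ gives $C\Delta=(C\mathbf 1)u^\top$. Hence for \emph{every} $C$,
\[
\mathrm{sens}(C)\ge \|C\mathbf 1\|_2 .
\]
For the factorizations used in the upper bounds, $C$ is a non-negative diagonal matrix. In that case the rows of $C\Delta$ are $c_t\Delta_t$, so $\|C\Delta\|_F^2\le \sum_t c_t^2$, and equality holds. Thus $\mathrm{sens}(C)=\|C\mathbf 1\|_2=(\sum_t c_t^2)^{1/2}$. For general $C$ we need only the inequality.

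\textbf{Trivial factorization.} Take $C=I$ and $B=A$. We have $\|A\|_F^2=\sum_{i=1}^n i=n(n+1)/2$ and $\mathrm{sens}(I)=\sqrt n$. Therefore
\[
\mathrm{RMSE}(A,I)=\tfrac{1}{\sqrt n}\sqrt{n(n+1)/2}\,\sqrt n=\sqrt{n(n+1)/2}.
\]

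\textbf{Diagonal factorizations.} Take $C=\mathrm{diag}(c_j)$ with $c_j>0$, and set $m_j:=n-j+1$. Column $j$ of $A$ has exactly $m_j$ ones, so
\[
\|AC^{-1}\|_F^2=\sum_j \frac{m_j}{c_j^2},\qquad \mathrm{sens}(C)^2=\sum_j c_j^2 .
\]
By Cauchy--Schwarz,
\[
\Big(\sum_j \frac{m_j}{c_j^2}\Big)\Big(\sum_j c_j^2\Big)\ge\Big(\sum_j \sqrt{m_j}\Big)^2 .
\]
Equality holds exactly when $c_j^2\propto\sqrt{m_j}$, that is, $c_j=(n-j+1)^{1/4}$. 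This choice is therefore optimal among diagonal factorizations, and it gives
\[
\mathrm{RMSE}=\tfrac{1}{\sqrt n}\sum_{j=1}^n\sqrt j .
\]
Comparing the sum with $\int_0^n\sqrt x\,dx$ yields $\sum_{j=1}^n\sqrt j\sim \tfrac23 n^{3/2}$, so $\mathrm{RMSE}\sim 2n/3$. The ratio to the trivial constant is $(1/\sqrt2)/(2/3)\approx1.06$, which is the stated $6\%$ gap.

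\textbf{Lower bound.} Let $A=BC$ be any factorization. Since $\|Bx\|_2\le\|B\|_2\|x\|_2\le\|B\|_F\|x\|_2$, we get
\[
\|B\|_F\,\mathrm{sens}(C)\ge\|B\|_F\,\|C\mathbf 1\|_2\ge\|BC\mathbf 1\|_2=\|A\mathbf 1\|_2 .
\]
The vector $A\mathbf 1$ equals $(1,2,\dots,n)^\top$, so $\|A\mathbf 1\|_2^2=n(n+1)(2n+1)/6$. Dividing by $\sqrt n$ gives
\[
\mathrm{RMSE}(B,C)\ge\sqrt{(n+1)(2n+1)/6}\sim n/\sqrt3 .
\]
The ratio $(1/\sqrt2)/(1/\sqrt3)=\sqrt{3/2}\approx1.22$ confirms that both constructions are within about $23\%$ of this bound.

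\textbf{Main obstacle.} The only delicate point is the sensitivity definition for general, possibly non-Toeplitz or signed, $C$. Theorem~\ref{thm:sensitivity} does not apply there. The fix is to use just the lower bound $\|C\mathbf 1\|_2$, obtained by taking all gradient differences equal, and to verify exact equality only for the non-negative diagonal matrices used in the upper bounds.
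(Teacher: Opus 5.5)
Your proposal is correct and follows essentially the same route as the paper: the lower bound comes from $\mathrm{sens}(C)\ge\|C\mathbf{1}\|_2$ combined with $\|A\mathbf{1}\|_2=\|BC\mathbf{1}\|_2\le\|B\|_F\|C\mathbf{1}\|_2$, and the upper bound comes from the explicit diagonal factorization, with Cauchy--Schwarz giving optimality among diagonal $C$. The differences are minor. You verify $\mathrm{sens}(C)=\|C\mathbf{1}\|_2$ directly for nonnegative diagonal $C$ and spell out the Cauchy--Schwarz equality case, whereas the paper cites Proposition E.1 of \citet{choquette2023amplified} for nonnegative $C$ and additionally shows the intermediate infimum is tight, which the statement does not require.
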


We conjecture that the diagonal factorization is constant optimal\footnote{Note that in Proposition 4.9 from \citet{pillutla2025correlated} the diagonal factorization is stated to be constant optimal, however they consider $b$-banded class of factorizations, which consists only of diagonal matrices in the full-batch regime. We do not have such restrictions.}; however, there
remains an approximately $15\%$ gap to the lower bound. This establishes that DP-SGD is not
numerically optimal under the RMSE metric. Nevertheless, DP-SGD could be optimal under the
MaxSE metric. 

\begin{lemma}(Proposition 3.11 from \citet{pillutla2025correlated})
\label{lem:dp_sgd_maxse_optimality}
The trivial factorization $B = A$, $C = I$ is optimal among all
factorizations for $b = 1$ and $k = n$ in dimension $d = 1$ under the MaxSE metric.
\end{lemma}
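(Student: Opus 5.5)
The plan is to compute the MaxSE of the trivial factorization exactly and then match it with a one-line lower bound that holds for every factorization $A = BC$. Both parts only use the definitions of $\|\cdot\|_{2\to\infty}$ and of $\mathrm{sens}_{k,b}$. The closed form of Theorem~\ref{thm:sensitivity} is not needed, since it applies only to decreasing Toeplitz $C$ and the lemma concerns arbitrary $C$.

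\emph{Upper bound for $C = I$.} With $b = 1$ and $k = n$, a single data point may participate in every iteration. In dimension $d = 1$ the sensitivity is therefore
\[
\mathrm{sens}_{n,1}(C) = \sup_{g \in [-\zeta,\zeta]^n} \|C g\|_2 ,
\]
where $g$ is the difference of clipped gradients and we normalize $\zeta = 1$; any factor $2$ from replacement adjacency is a constant independent of $(B,C)$ and is dropped. For $C = I$ this supremum equals $\sqrt{n}$. Since every row of $A$ has norm at most $\sqrt{n}$, with equality for the last row, we get $\|A\|_{2\to\infty} = \sqrt{n}$. Hence $\mathrm{MaxSE}(A, I) = n$.

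\emph{Lower bound for arbitrary $B, C$.} I will look at the last row of $A$, namely $a_n = \mathbf{1}^\top$, which satisfies $a_n = B_{n,:}\, C$. Multiplying on the right by $\mathbf{1}$ and applying Cauchy--Schwarz gives
\[
n = a_n \mathbf{1} = B_{n,:}\,(C\mathbf{1}) \le \|B_{n,:}\|_2 \,\|C \mathbf{1}\|_2 \le \|B\|_{2\to\infty}\,\mathrm{sens}_{n,1}(C).
\]
The last inequality holds because $\mathbf{1}$ is an admissible gradient-difference vector: the user participates in every step with a unit-norm gradient. Therefore $\mathrm{MaxSE}(B,C) \ge n = \mathrm{MaxSE}(A,I)$, which is the claimed optimality.

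The only step requiring care is making sure the sensitivity definition in the full-participation, $d = 1$ setting really allows the all-ones witness $g = \mathbf{1}$ (equivalently, the sign pattern of all columns of $C$ summed). It does: under $b = 1$, $k = n$ the participation pattern places the user in all $n$ rounds, so this is exactly the sum of all columns, $C\mathbf{1}$. Once that is settled, everything else is a direct computation. The restriction to $d = 1$ matters only because it makes the sensitivity a supremum over scalar sequences, so no vector-valued adversarial choices need to be handled.
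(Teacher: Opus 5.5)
Your proof is correct. The paper gives no argument of its own for this lemma; it imports Proposition~3.11 of \citet{pillutla2025correlated}. Your proof is the MaxSE counterpart of the RMSE lower bound that the paper does prove in the appendix. There, the same all-participation, single-direction witness gives $\mathrm{sens}_{n,1}(C)\ge\|C\mathbf{1}\|_2$, and the factorization is then handled via $\|A\mathbf{1}\|_2\le\|B\|_2\|C\mathbf{1}\|_2\le\|B\|_F\|C\mathbf{1}\|_2$. You replace this by Cauchy--Schwarz on the last row alone, $n=(A\mathbf{1})_n=B_{n,:}\,C\mathbf{1}$, and that is what makes your bound tight: it is attained by $C=I$. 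The Frobenius version is not attained by $C=I$ (it gives $n/\sqrt{3}$ against DP-SGD's $n/\sqrt{2}$), which is consistent with the paper's finding that DP-SGD is exactly MaxSE-optimal but not RMSE-optimal in the full-batch regime.

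Your closing remark undersells the argument, because nothing in it uses $d=1$. In dimension $d$ the matrix $\mathbf{1}e_1^\top$ is still an admissible gradient difference, so $\mathrm{sens}(C)\ge\|C\mathbf{1}\|_2$ for every $C$, while $\mathrm{sens}(I)=\sqrt{n}$ for every $d$. You therefore obtain MaxSE-optimality of the trivial factorization directly, in all dimensions and against arbitrary (not necessarily nonnegative) strategy matrices. This is stronger than the extension the paper derives right after the lemma by invoking Proposition~E.1 of \citet{choquette2023amplified}, which requires a nonnegativity condition on $C$.
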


Combining Lemma~\ref{lem:dp_sgd_maxse_optimality} with  Proposition E.1 in \citet{choquette2023amplified} it follows that DP-SGD is optimal in any dimension $d\ge 1$ among factorizations with positive elements in the strategy matrix $C$, resulting in the following corollary.

\begin{corollary}
The trivial factorization $B = A$, $C = I$ is optimal for \method under the RMSE
and MaxSE metrics for $b = 1$ and $k = n$.
\end{corollary}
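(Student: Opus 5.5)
The plan is to reduce the corollary to the two cited facts, after checking that the \method family sits inside the class where those facts apply, and to handle RMSE separately by a direct computation.

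\emph{Step 1 (the class).} For every $\lambda\in[0,1)$ the strategy matrix $C_\lambda$ in \eqref{eq:C_lambda} is lower triangular Toeplitz with entries $\lambda^{i-j}\ge 0$. These entries are non-increasing, so \cref{thm:sensitivity} applies. With $b=1$ and $k=n$, the participation pattern includes every column. Hence $\mathrm{sens}_{n,1}(C_\lambda)=\|C_\lambda\mathbf{1}\|_2$, and the $i$-th entry of $C_\lambda\mathbf{1}$ is $s_i:=\sum_{m=0}^{i-1}\lambda^m=(1-\lambda^i)/(1-\lambda)$. Since all entries of $C_\lambda$ are non-negative, $C_\lambda$ lies in the class of positive strategy matrices covered by Proposition E.1 of \citet{choquette2023amplified}. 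By that proposition, the worst-case sensitivity in any dimension $d$ equals the $d=1$ value. Lemma~\ref{lem:dp_sgd_maxse_optimality} then says the MaxSE of $(A,I)$ is no larger than that of any factorization, in particular of $(AC_\lambda^{-1},C_\lambda)$. This settles MaxSE, and $\lambda=0$, which gives $C_0=I$, is optimal.

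\emph{Step 2 (RMSE, direct).} Theorem~\ref{thm:full_batch_constants} shows that DP-SGD is \emph{not} RMSE-optimal over all factorizations, so the RMSE claim must use the structure of $C_\lambda$. I would compute directly.
\begin{itemize}
\item $B_\lambda=AC_\lambda^{-1}$ has entries $(B_\lambda)_{ij}=1$ for $j=i$ and $(B_\lambda)_{ij}=1-\lambda$ for $j<i$.
\item Therefore $\|B_\lambda\|_F^2 = n+(1-\lambda)^2\,n(n-1)/2$.
\item Also $\mathrm{sens}^2 = \sum_{i=1}^n s_i^2$.
\end{itemize}
The target inequality is
\[
\Bigl(n+(1-\lambda)^2\tfrac{n(n-1)}{2}\Bigr)\sum_{i=1}^n s_i^2 \;\ge\; n\cdot\tfrac{n(n+1)}{2},
\]
where the right-hand side is the $\lambda=0$ value.

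\emph{Step 3 (the inequality).} Use $s_i\ge 1$ and the elementary bound $s_i\ge \min(i,\,1/(1-\lambda))$. Writing $\mu=1-\lambda\in(0,1]$, the left-hand side is at least $\bigl(n+\mu^2 n(n-1)/2\bigr)\sum_i \min(i,1/\mu)^2$. Split into two cases.
\begin{itemize}
\item If $1/\mu\ge n$, then $\sum_i \min(i,1/\mu)^2=\sum_i i^2\approx n^3/3$. Multiplied by $n$, this already exceeds $n^2(n+1)/2$ for $n\ge 2$; the case $n=1$ is an equality.
\item If $1/\mu<n$, then $\sum_i \min(i,1/\mu)^2 \ge n/\mu^2 - O(1/\mu^3)$. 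Multiplying by $\mu^2 n(n-1)/2$ gives roughly $n^2(n-1)/2\cdot\bigl(1-O(1/(\mu n))\bigr)$. The leading factor $n$ in the first term compensates for the lost terms.
\end{itemize}

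I expect this constant bookkeeping in Step 3 to be the main obstacle. The cleanest route is to expand the product exactly, or to use Cauchy--Schwarz in the form $\|B_\lambda\|_F\,\|C_\lambda\mathbf{1}\|_2\ge \langle\cdot,\cdot\rangle$, pairing row sums of $B_\lambda$ with $s_i$, so that the lower bound $\sum_i i=n(n+1)/2$ appears directly. If that pairing fails to be tight, I would fall back on monotonicity in $\lambda$: show that the derivative of the logarithm of the product is non-negative on $[0,1)$.
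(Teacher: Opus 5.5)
Your Step~1 is correct and is the paper's MaxSE argument. Your Step~2 formulas ($(B_\lambda)_{ij}$, $\|B_\lambda\|_F^2=n+(1-\lambda)^2n(n-1)/2$, $\mathrm{sens}_{n,1}^2(C_\lambda)=\sum_i s_i^2$) and the target inequality are also right. The RMSE half, however, is not proved.

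The ``elementary bound'' $s_i\ge\min(i,1/(1-\lambda))$ is false; it is an \emph{upper} bound. Indeed $s_i=\sum_{m<i}\lambda^m\le\min(i,1/(1-\lambda))$, strictly for $i\ge 2$ and $\lambda\in(0,1)$ (e.g.\ $\lambda=1/2$, $i=2$ gives $s_2=3/2<2$). Valid lower bounds of this shape lose a constant factor, e.g.\ $s_i\ge(1-e^{-1})\min(i,1/(1-\lambda))$. That loss is fatal here. Your target is an equality at $\lambda=0$, and for small $\lambda$ the left side exceeds the right only by about $\lambda n(n-1)$ while both sides are of order $n^3$. So exact constants are needed near $\lambda=0$, and the $O(1/\mu^3)$ bookkeeping of Case~2 cannot supply them.

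The Cauchy--Schwarz fallback cannot work as described either. It only uses $B_\lambda(C_\lambda\mathbf 1)=A\mathbf 1$, which holds for every factorization $BC=A$. Hence it cannot prove an inequality that Theorem~\ref{thm:full_batch_constants} shows is false for the diagonal factorization. The direct version gives only $\|A\mathbf 1\|_2^2=n(n+1)(2n+1)/6<n^2(n+1)/2$. The monotonicity fallback is only announced, not carried out.

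The missing idea is to derive RMSE optimality from your Step~1 via Lemma~\ref{lem:method_minimizer_inequality}, which is what the paper does. For $C_\lambda$ both metrics share the factor $\mathrm{sens}^2_{n,1}(C_\lambda)$. They differ only through $\|B_\lambda\|_{2\to\infty}^2=1+(1-\lambda)^2(n-1)$ versus $\|B_\lambda\|_F^2/n=1+(1-\lambda)^2(n-1)/2$. Set $F_\gamma(\lambda)=\bigl(1+\gamma(1-\lambda)^2(n-1)\bigr)\mathrm{sens}^2_{n,1}(C_\lambda)$. Then
\[
F_{1/2}(\lambda)=F_1(\lambda)\,h(\lambda),\qquad h(\lambda)=\frac{1+(1-\lambda)^2(n-1)/2}{1+(1-\lambda)^2(n-1)}=\frac12+\frac{1}{2\bigl(1+(1-\lambda)^2(n-1)\bigr)},
\]
and $h$ is nondecreasing in $\lambda$. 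Step~1 gives $F_1(\lambda)\ge F_1(0)$. Hence $F_{1/2}(\lambda)\ge F_1(0)\,h(0)=F_{1/2}(0)$ for every $\lambda\in[0,1)$.

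The paper phrases this as ``every RMSE minimizer is at most a MaxSE minimizer, which is $0$.'' The product form above is the same argument, and it does not even need an RMSE minimizer to exist. Your Step~2 inequality is true, but the way to obtain it is through MaxSE, not through estimates on the $s_i$.
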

\begin{proof}
From Lemma~\ref{lem:dp_sgd_maxse_optimality}, it follows that the trivial
factorization is optimal for \method in MaxSE metric since for any $\lambda > 0$ the values in $C_{\lambda}$ \eqref{eq:C_lambda} are positive.
From Lemma~\ref{lem:method_minimizer_inequality}, it follows that the optimal value of
$\lambda$ for the RMSE metric is bounded by that for the MaxSE metric, which is
$0$ in this case. Therefore, $\lambda = 0$, i.e., the trivial factorization is
also optimal for RMSE.
\end{proof}

\begin{figure*}[t]
    \centering
    \begin{minipage}{0.65\textwidth}
        \begin{minipage}{.47\textwidth}
        \includegraphics[height=.8\linewidth]{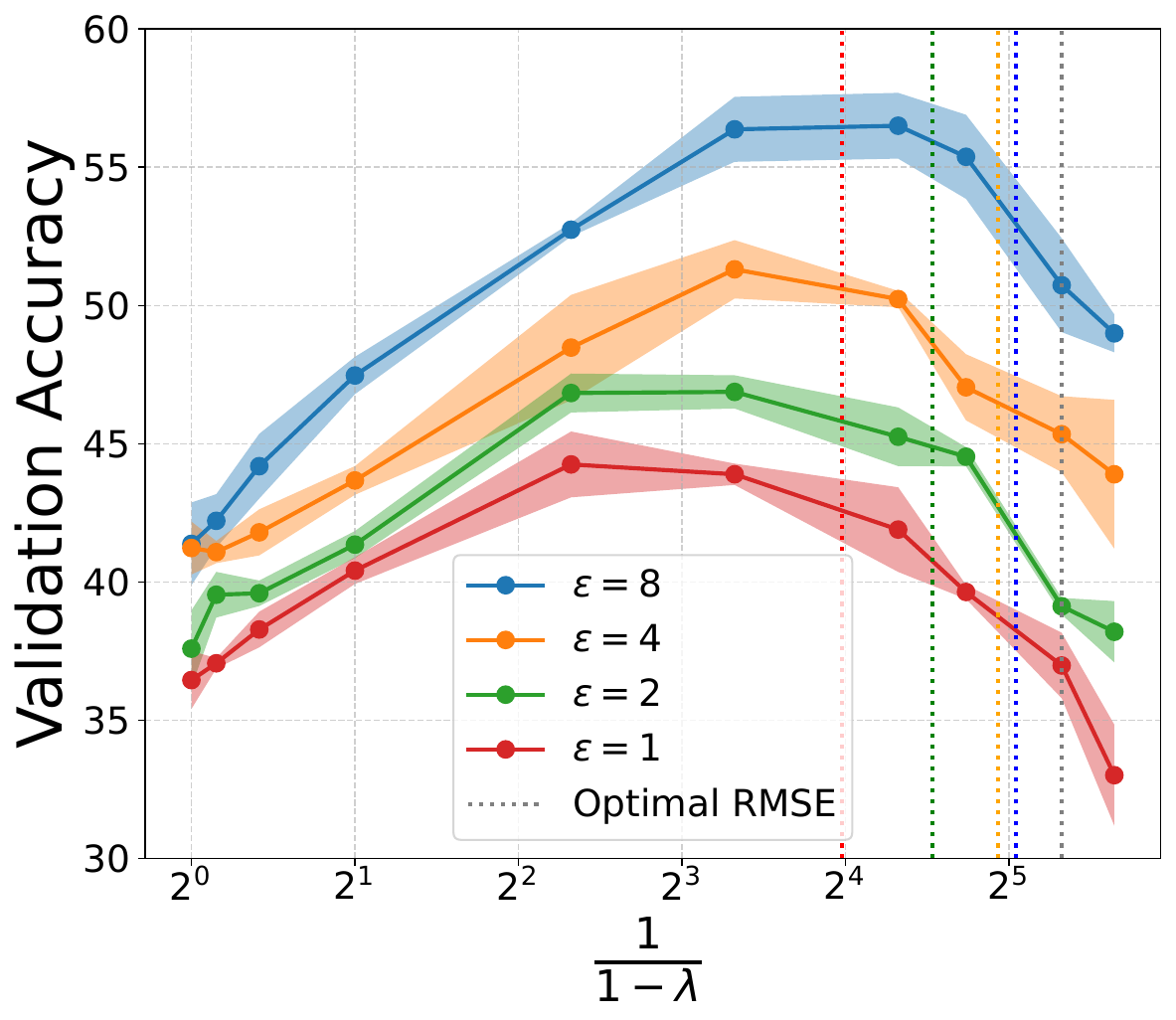}
        
        \centerline{\footnotesize CIFAR-10 $B=128$, $k=10$.}
        \end{minipage}
        \quad
        \begin{minipage}{.47\textwidth}
        \includegraphics[height=.8\linewidth]{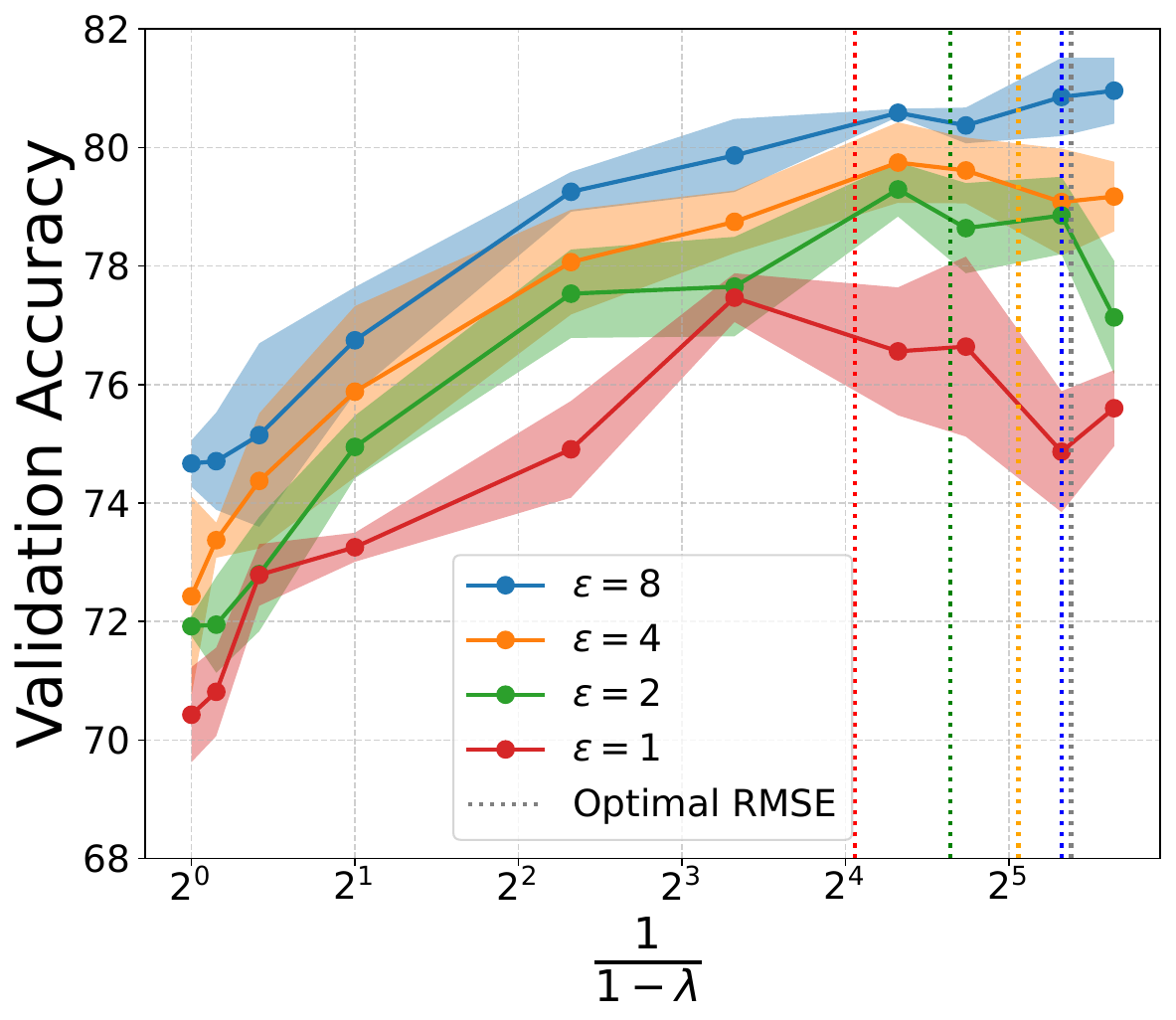}
        
        \centerline{\footnotesize IMDb $B=64$, $k=10$.}
        \end{minipage}
    \caption{Validation accuracy of \method with Balls-in-Bins amplification by subsampling for different values of $\lambda$. Vertical lines indicate the optimal values based on the amplified RMSE, suggesting that Balls-in-Bins prefers smaller values of $\lambda$. However, the value of $\lambda$ that is optimal for accuracy is even lower, at least for CIFAR-10 training.}
    \label{fig:amplification_results}
    \end{minipage}
    \quad
    \begin{minipage}{0.31\textwidth}
    \includegraphics[height=.8\linewidth]{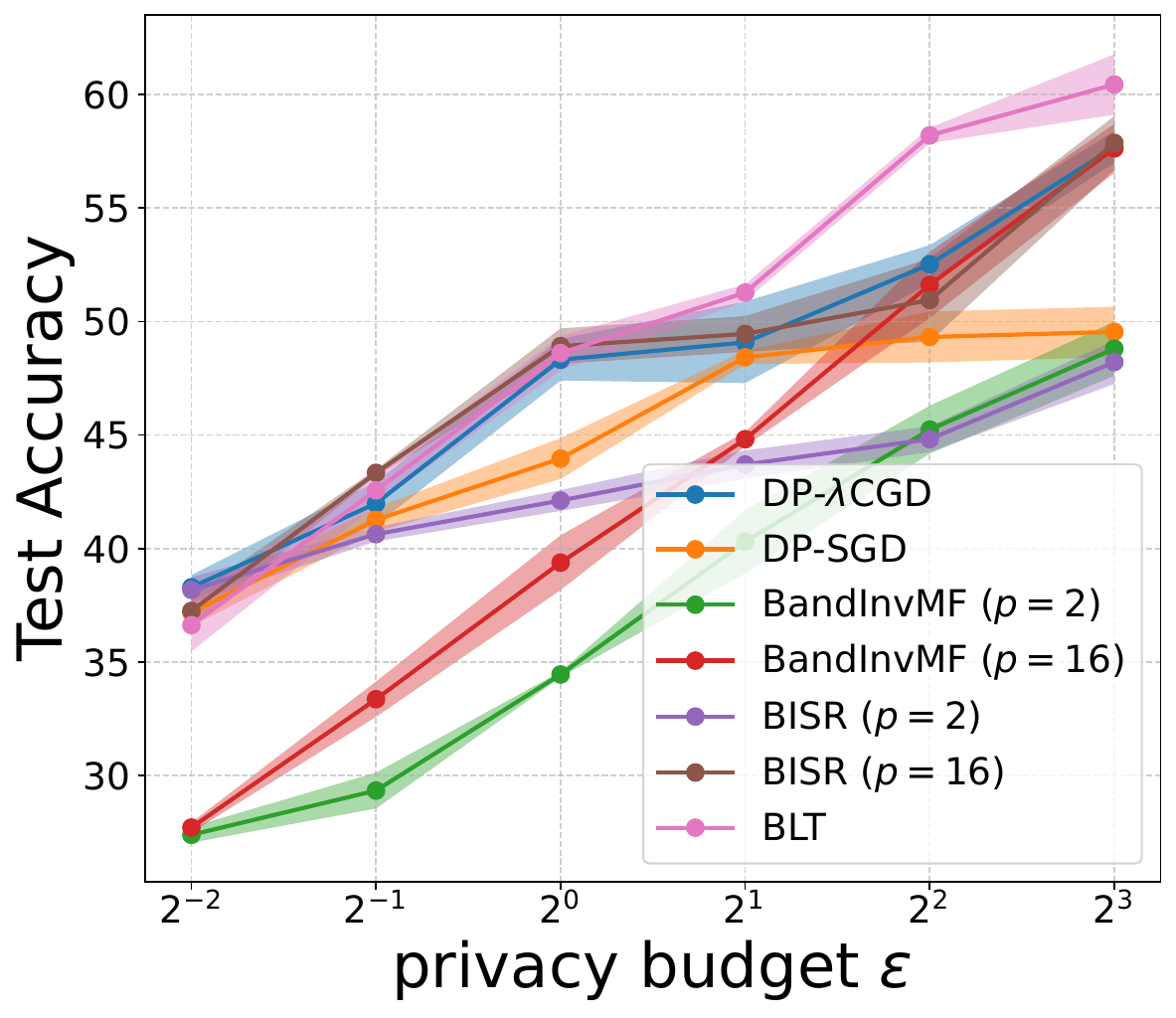}
    \caption{Test accuracy on CIFAR-10 ($B=128$, $k=10$; error bars based on three runs). We compare DP-SGD with Poisson subsampling, \method, BISR, BandInvMF, and BLT with Balls-in-Bins subsampling. See main body for details. 
}
    \label{fig:method_comparison_cifar_10}
    \end{minipage}
\end{figure*}

\subsection{Amplification}
\label{sub:amplification}

The most common amplification scheme for DP-SGD is Poisson subsampling \citep{Abadi, koskela2020computing, zhu2022optimal}, in which each batch is formed independently and each element is included with a probability determined by the expected batch size. This method provides the strongest known privacy guarantees among sampling schemes. However, it has notable limitations. In expectation, nearly one third of the data (specifically, a fraction $1/e$) does not participate in each epoch. Moreover, Poisson subsampling is difficult to implement efficiently at scale, since forming each batch would require scanning the entire dataset and independently deciding whether to include each element. This leads to the common but undesirable practice of training with shuffling while reporting privacy guarantees under Poisson subsampling \citep{lebeda2025avoiding, beltran2024towards}.

To address these issues, an alternative sampling scheme was proposed, known as Balls in Bins (BnB) \citep{chua2024balls, choquette2024near, schuchardt2026sampling} or random allocation \citep{feldman2025privacy, feldman2026efficient}. In this approach, one first specifies the number of batches in an epoch and then assigns each data point to a random batch. This guarantees that every sample participates once per epoch, and by preprocessing the dataset, one can efficiently construct batches for multi epoch training. 

Balls-in-Bins amplification by subsampling can also be applied to matrix factorization \citep{choquette2024near} for any correlation matrix. We use it for \method, as it is also the only currently available amplification scheme that is applicable to banded inverse matrices. Poisson subsampling can be applied to $b$-min-separated participation in the banded-matrix case~\citep{choquette2023amplified}. It can also be applied to general matrices, although the existing analysis based on conditional composition~\citep{choquette2023privacy} does not provide sufficiently tight accounting, making it less practical than Balls-in-Bins. Recent work introduced a generalization of Balls-in-Bins, called $b$-min-sep subsampling~\citep{dong2026privacy}. However, this approach
does not guarantee an improvement over Balls-in-Bins, has significantly slower accounting, and is generally practical only for banded methods. The development of improved amplification techniques that are suitable for banded inverse matrices is an important direction for future research. Since our method is orthogonal to the choice of amplification scheme, any such analysis can be incorporated seamlessly into our mechanism.

\citet{choquette2024near} compute the privacy level using Monte Carlo sampling. This procedure is effective in practice but acts as a black box and does not allow us to argue theoretically about its properties. We therefore study the effect of amplification numerically. Our experiments show that amplification by Balls in Bins subsampling prefers smaller values of $\lambda$ than those that would be chosen based solely on RMSE. In particular, the values that minimize the amplification-aware RMSE: $\tfrac{1}{\sqrt{n}}\|B\|_F \cdot \sigma$ are closer to the optimal factorization for \method\ than those obtained using RMSE alone. See Figure~\ref{fig:amplification_results} for an illustration.

\paragraph{Practical considerations for the choice of $\lambda$.}
Our main practical observation is that, in our experiments, the optimal value of
$\lambda$ is consistently smaller than the value suggested by amplified RMSE. The
latter is itself smaller than the value suggested by RMSE, which is provably smaller
than the value suggested by MaxSE. Thus, a practical guideline is to choose a value of
$\lambda$ no larger than the one minimizing amplified RMSE: in our experiments, the
optimal choice is typically $2$--$4\times$ smaller, measured on the scale
$\frac{1}{1-\lambda}$, than the value suggested by amplified RMSE.

\section{Experiments}
We compare \method, amplified via Balls-in-Bins subsampling, to other memory-efficient alternatives, including DP-SGD with Poisson subsampling (a commonly used amplification that empirically yields a slightly lower noise multiplier) and the banded inverse factorizations Banded Inverse Matrix Factorization (BandInvMF) and Banded Inverse Square Root (BISR) \citep{kalinin2025back} with bandwidths $2,4,16$.
 We also include Buffered Linear Toeplitz (BLT)  factorization \citep{dvijotham2024efficient}; although BLT cannot be run efficiently without additional memory, we find it to be a competitive low-memory approach, requiring to store a small buffer of $4-5$ vectors. In Table~\ref{tab:eps_all}, we compare the methods using the RMSE metric, with and without amplification. In the table, we also report banded factorizations such as Banded Matrix Factorization (BandMF) \citep{scalingmckenna2024} and Banded Square Root (BSR) \citep{kalinin2024}; however, they do not perform competitively at small bandwidths ($2$, $4$, or even $16$).

Figure~\ref{fig:method_comparison_cifar_10}
reports the results of CNN training on the 
CIFAR-10 dataset (for hyperparameters, see Table~\ref{tab:hyperparams} in the appendix).
Our proposed \method outperforms DP-SGD even in the high-privacy regime $\epsilon=1/4$. It performs on par with the BISR factorization with bandwidth $16$ while using bandwidth $2$, which allows us to regenerate the noise $8\times$ faster. Among memory-efficient factorizations, BLT performs better in the low-privacy regime; however, its noise-correlation technique requires either storing a buffer in memory or regenerating the entire sequence. Overall, our method outperforms or matches all memory-free, time-efficient baselines, while also being the most time-efficient. See Figure~\ref{fig:test_accuracy_imdb} in the appendix for additional experiments on the IMDB dataset.

\section*{Discussion and Future Directions}

We proposed \method, a memory-free and time-efficient approach for introducing noise
correlation in differentially private model training. Our experiments show that \method
improves upon the utility of DP-SGD and other memory-efficient baselines, while running
at nearly the same speed as DP-SGD. From the perspective of noise correlation, \method
is arguably the simplest nontrivial factorization, which enables us to isolate and
analyze phenomena that are difficult to study for more
general classes of factorizations.

It has been observed that the RMSE-optimal factorization does not necessarily yield the
best downstream performance. \method provides a practical way to interpolate between
DP-SGD and the RMSE-optimal factorization. A promising direction for future work is to
develop a theoretically grounded interpolation for larger bandwidths, which could
retain the benefits of smaller RMSE available at higher bandwidth while preserving the
computational and memory efficiency of noise regeneration.

\section*{Acknowledgment}
We thank Jalaj Upadhyay for his valuable feedback on the early version of the paper.

Nikita Kalinin: This work is
supported in part by the Austrian Science Fund (FWF) [10.55776/COE12]. 

Rasmus Pagh was supported by a Data Science Distinguished Investigator grant from Novo Nordisk Fonden.

\bibliography{lit}
\bibliographystyle{abbrvnat}


\appendix
\clearpage
\section{\method Algorithm}

\begin{figure}[h]
    \centering
    \includegraphics[width=0.5\linewidth]{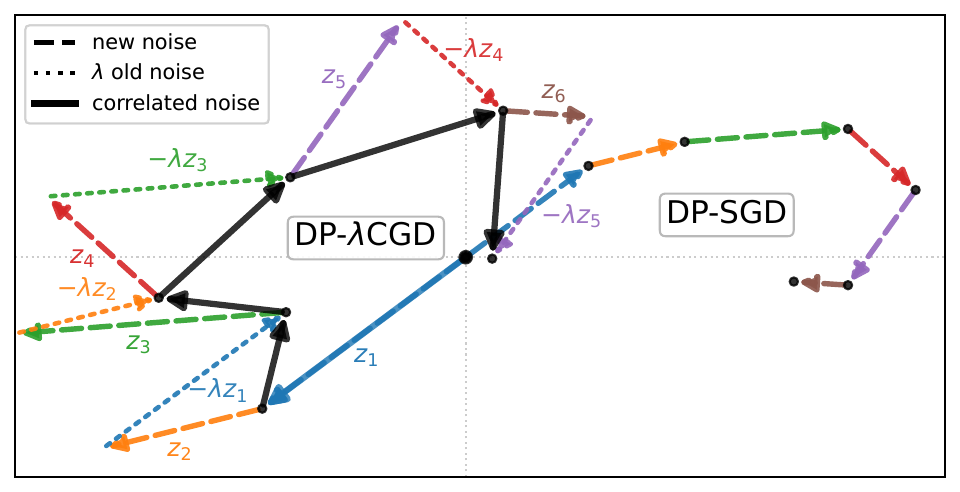}
    \caption{Visualization of noise correlation in \method. At each step, fresh noise $z_i$ is generated, and a $\lambda$-fraction of the previously added noise $z_{i-1}$ is canceled. In contrast, DP-SGD adds independent noise at each iteration. Although the per-iteration noise vectors in \method have larger variance, their correlation leads to partial noise cancellation, yielding a lower total variance. 
    }
    \label{fig:visualisation}
\end{figure}

\begin{algorithm}[h]
\caption{\method}
\label{alg:method}
\begin{algorithmic}[1]
\Require Initialization $\theta_0 \in \mathbb{R}^d$, dataset $\mathcal{D}$, batch size $B$, clip norm $\zeta$, learning rate $\eta>0$, loss $\ell(\theta,d)$, parameter $\lambda \in [0,1)$, number of epochs $k$, target privacy level $\epsilon,\delta$, number of iterations $n$, pseudorandom noise generator $\mathrm{gen}$.
\State Construct matrix $C_{\lambda} = \mathrm{Lower\_Triangular\_Toeplitz}(1, \lambda, \dots, \lambda^{n-1})$
\If{use Balls in Bins accountant}
    \State Compute noise multiplier $\sigma = \mathrm{BnB}(C_{\lambda}, k, \epsilon, \delta)$ and allocate batches $D_0, \dots, D_{|\mathcal{D}|/B - 1}$
\ElsIf{no amplification}
    \State $\sigma = \mathrm{sens}_{k,b}(C_{\lambda})\,\sigma_{\epsilon,\delta}$, where $\sigma_{\epsilon,\delta}$ is from the Gaussian mechanism
\EndIf
\State $Z_0 \xleftarrow{} 0$
\For{$i=1$ to $n$}
    \If{use Balls in Bins subsampling}
        \State $S_i = D_{i - 1 \bmod (n/k)}$ \Comment{Sample batch}
    \ElsIf{no amplification}
        \State $S_i = \{d_{(i-1)B \bmod |\mathcal{D}|},\dots,d_{(iB-1) \bmod |\mathcal{D}|}\}$
    \EndIf
    \For{$j=1$ to  $|S_i|$}
        \State $g_j \gets \nabla_\theta \ell(\theta_{i-1}, d_j)$
        \State $\tilde g_j \gets \min\!\bigl(1, \frac{\zeta}{\|g_j\|}\bigr)\, g_j$ \quad \Comment{per-example clipping}
    \EndFor
    \State $x_i \gets \sum_{j=1}^{|S_i|} \tilde g_j$ \qquad \Comment{aggregate clipped gradients}
    \If{$i \ne 1$}
    \State $\mathrm{gen} \gets \mathrm{gen.set\_state(\mathrm{GSt}_{i -1} )}$
    \State $Z_{i-1} \sim \mathrm{gen}()$ \qquad \Comment{regenerate noise}
    \EndIf
    \State $\hat x_i \gets x_i - \zeta\sigma \lambda Z_{i-1}$
    \State $\mathrm{GSt}_i \gets \mathrm{gen.get\_state()}$
    \State $Z_i \sim \mathrm{gen}()$ \qquad \Comment{generate fresh noise}
    \State $\hat x_i \gets \hat x_i + \zeta\sigma Z_i$
    \State $\theta_i \gets \theta_{i-1} - \frac{\eta}{B}\hat x_i$ \qquad \Comment{model update}
\EndFor
\State \textbf{return} $\theta_n$
\end{algorithmic}
\end{algorithm}

We present the full algorithm for \method in Algorithm~\ref{alg:method}. In addition to noise correlation, we emphasize the role of the subsampling accountant $\mathrm{BnB}$ and the batch sampling scheme. The noise regeneration technique assumes a pseudo-random generator $\mathrm{gen}$ that supports setting and retrieving its state via $\mathrm{gen.set\_state()}$ and $\mathrm{gen.get\_state()}$, respectively. We illustrate the noise correlation process in Figure~\ref{fig:visualisation}.

\clearpage

\section{Column Normalized \method}

\begin{figure}
    \centering
    \includegraphics[width=0.4\linewidth]{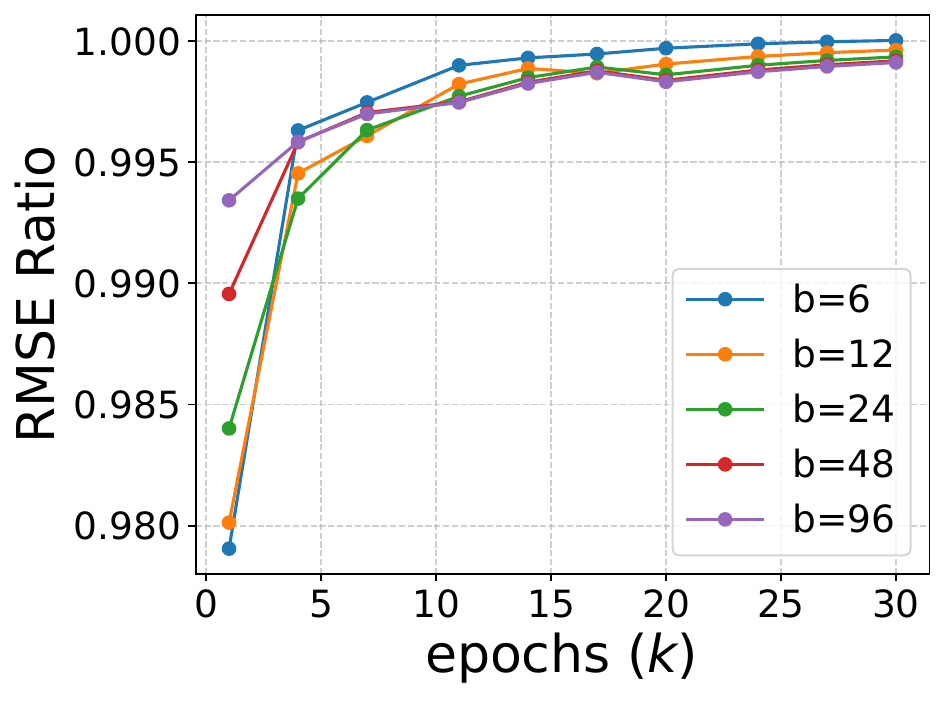}
    \caption{Ratio of RMSE for column-normalized \method to \method for varying number of epochs $k$ and number of iterations per epoch (separation parameter) $b$. Small $k$  yields a larger improvement from normalization.}
    \label{fig:rmse_normalized_ratio}
\end{figure}
The performance of \method can be further improved by column normalization.
Let
$D = \mathrm{diag}(d_1, \dots, d_n)$
denote the column norms of the matrix $C_\lambda$, and consider the normalized
\method factorization with strategy matrix $C_\lambda D^{-1}$.

It has been observed that column normalization improves performance for banded
matrices \citep{pillutla2025correlated}, and it provably improves both RMSE and
MaxSE for the square-root factorization under single participation
\citep{henzinger2025normalized}. The efficiency of the \method is preserved; in fact any linear noise correlation via matrix $C^{-1}$ can be done without loss of efficiency, we formulate it as the following proposition.

\begin{proposition}[Cost of column normalization]
Let $C\in\mathbb{R}^{n\times n}$ be invertible and let $Z\in\mathbb{R}^n$ be any noise vector.
Let $D=\mathrm{diag}(d_1,\dots,d_n)$ with $d_j>0$ (e.g., $d_j=\|C_{:,j}\|_2$), and define the
column-normalized matrix $\widetilde C \;=\; C D^{-1}$.
Then for any $Z$, $
\widetilde C^{-1} Z \;=\; D\,(C^{-1}Z)$.
Consequently, if computing $C^{-1}Z$ has time complexity $T(n)$, then computing
$\widetilde C^{-1}Z$ has time complexity $T(n)+\mathcal{O}(n)$, i.e., only $\mathcal{O}(1)$
additional work per component (one scaling by $d_i$).
\end{proposition}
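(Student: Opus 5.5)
The identity follows from the elementary fact that the inverse of a product of invertible matrices is the product of the inverses in reverse order. The complexity claim then comes from evaluating the resulting product from right to left.

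First, I will check that $\widetilde C$ is invertible. $C$ is invertible by hypothesis, and $D$ is diagonal with strictly positive entries $d_j>0$, so $D^{-1}=\mathrm{diag}(1/d_1,\dots,1/d_n)$ exists and is itself invertible with inverse $D$. Hence $\widetilde C = C D^{-1}$ is a product of two invertible matrices and is invertible, with
\begin{equation}
\widetilde C^{-1} = (C D^{-1})^{-1} = (D^{-1})^{-1} C^{-1} = D\, C^{-1}.
\end{equation}
To guard against an ordering slip, I would verify this directly: $(CD^{-1})(DC^{-1}) = C(D^{-1}D)C^{-1} = I$, and since the matrices are square, the one-sided inverse suffices. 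Applying both sides to an arbitrary $Z\in\mathbb{R}^n$ and using associativity gives $\widetilde C^{-1}Z = D\,(C^{-1}Z)$.

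For the complexity statement, I would compute $\widetilde C^{-1}Z$ in two stages. First, form $y := C^{-1}Z$ by whatever procedure attains cost $T(n)$. Second, form $Dy$, which is simply the componentwise product $(d_1y_1,\dots,d_ny_n)$ and needs $n$ scalar multiplications, that is, $\mathcal{O}(n)$ work and $\mathcal{O}(1)$ per component. The total is $T(n)+\mathcal{O}(n)$. One point deserves a remark: the scalars $d_j$ must be available at no extra asymptotic cost. They can be precomputed once and stored as $n$ scalars, which is negligible next to $d$-dimensional noise vectors. For $C_\lambda$ they even have the closed form $d_j^2=\sum_{i=0}^{n-j}\lambda^{2i}=\frac{1-\lambda^{2(n-j+1)}}{1-\lambda^2}$, so each can be computed in $\mathcal{O}(1)$ on the fly.

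Finally, I would note that the argument carries over verbatim when $Z$ is an $n\times d$ noise matrix: $D$ then rescales row $i$ of $C^{-1}Z$ by $d_i$. In the streaming setting, this means that at step $i$ the correlated noise produced by the existing \method\ recursion is multiplied by the single scalar $d_i$. The regeneration scheme, and hence the memory footprint, is unchanged. I do not expect any real obstacle. The only places that need care are the order of the factors in the inverse ($DC^{-1}$, not $C^{-1}D$) and the assumption that the $d_j$ are precomputed or available in closed form.
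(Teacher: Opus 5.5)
Your proof is correct and follows the paper's argument: write $\widetilde C^{-1} = D C^{-1}$, apply it to $Z$, and count the diagonal scaling as $n$ extra scalar multiplications. Your additional remarks (checking the factor order, noting that the $d_j$ must be precomputed or available in closed form, and the row-wise extension to $n\times d$ noise) are sound refinements of the same argument.
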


\begin{proof}
Since $\widetilde C = C D^{-1}$, we have $\widetilde C^{-1} = D C^{-1}$. Multiplying by $Z$
gives $\widetilde C^{-1}Z = D(C^{-1}Z)$. The extra cost beyond $C^{-1}Z$ is applying the
diagonal matrix $D$, which is $n$ scalar multiplications.
\end{proof}

All efficiency properties are preserved: to correlate the noise $Z$, we first
multiply it by $C_\lambda^{-1}$, which we have shown can be computed
efficiently, and then rescale it by the diagonal matrix $D$.

There is, however, one subtle issue. The matrix $C_\lambda D^{-1}$ is no longer
Toeplitz, and therefore the sensitivity Theorem~\ref{thm:sensitivity} cannot be
applied directly. In this work, we generalize the sensitivity theorem specifically
for the normalized \method in the following theorem.

\begin{restatable}{lemma}{NormalizedSensitivity}
\label{lem:normalized-sensitivity}
The sensitivity of the column-normalized \method with strategy matrix $C_\lambda$ and column norms
$D = \mathrm{diag}(d_1,\dots,d_n)$ is given by
\begin{equation}
    \mathrm{sens}_{k,b}(C_\lambda D^{-1})
    =
    \Big\|
        \sum_{j=0}^{k-1} (C_\lambda D^{-1})_{:,\, jb+1}
    \Big\|_2 .
\end{equation}
\end{restatable}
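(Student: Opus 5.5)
The plan is to reduce the sensitivity to a maximization over admissible participation patterns, and then show by a shifting argument that the pattern $\{1, 1+b, \dots, 1+(k-1)b\}$ is optimal, exactly as in the Toeplitz case of Theorem~\ref{thm:sensitivity}. The obstacle is that $\widetilde C := C_\lambda D^{-1}$ is no longer Toeplitz, so shift-invariance has to be replaced by an explicit monotonicity property of the column norms.

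\textbf{Step 1: reduction to index sets.} All entries of $\widetilde C$ are nonnegative, so the Gram matrix $\widetilde C^\top \widetilde C$ is entrywise nonnegative. For a participation pattern $\pi$ and per-step contributions $g_j$ with $\|g_j\|_2\le 1$, the quantity $\|\sum_{j\in\pi}\widetilde C_{:,j} g_j^\top\|_F^2$ is maximized by taking all $g_j$ equal to a common unit vector. This is the argument of Proposition E.1 in \citet{choquette2023amplified}, used earlier in the paper. Hence
\begin{equation}
\mathrm{sens}_{k,b}(\widetilde C)^2 = \max_{\pi}\Big\|\sum_{j\in\pi}\widetilde C_{:,j}\Big\|_2^2 = \max_\pi \sum_{j,l\in\pi}\langle \widetilde C_{:,j},\widetilde C_{:,l}\rangle ,
\end{equation}
where $\pi$ ranges over sets of at most $k$ indices with pairwise gaps at least $b$. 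Since every Gram entry is nonnegative, adding participations never hurts, so we may take $|\pi|=k$ whenever the horizon allows.

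\textbf{Step 2: explicit Gram entries.} Write $x=\lambda^2$ and $d_j^2=\sum_{m=0}^{n-j}x^m = (1-x^{N_j})/(1-x)$ with $N_j=n-j+1$. For $j\le l$, a direct computation gives
\begin{equation}
\langle \widetilde C_{:,j},\widetilde C_{:,l}\rangle = \lambda^{l-j}\,\frac{d_l}{d_j} = \lambda^{l-j}\sqrt{h_{l-j}(N_j)},\qquad h_g(N):=\frac{1-x^{N-g}}{1-x^{N}}.
\end{equation}
In particular the diagonal entries all equal $1$.

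\textbf{Step 3: shifting argument (main step).} Let $\pi=\{p_1<\dots<p_k\}$ be an admissible pattern that is not the canonical one. Take the first index $r$ at which $p_r$ exceeds its minimal admissible value, and shift the whole suffix $p_r,\dots,p_k$ one step earlier. I will check that no Gram entry decreases.
\begin{itemize}[leftmargin=*]
\item \emph{Pairs inside the suffix.} The gap $g$ is unchanged and $N_j$ increases by one. So it suffices to show that $h_g(N)$ is nondecreasing in $N$. This follows from the identity
\begin{equation}
1-h_g(N)=\frac{x^{-g}-1}{x^{-N}-1},
\end{equation}
whose right-hand side is decreasing in $N$ for $0\le x<1$.
\item \emph{Pairs with $j$ in the prefix and $l$ in the suffix.} The factor $\lambda^{l-j}$ increases because the gap shrinks. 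The factor $d_l/d_j$ also increases, since $d_j$ is fixed and $d_l$ grows as $l$ moves earlier ($d_l$ is decreasing in $l$).
\item \emph{Pairs inside the prefix.} These entries are unchanged.
\end{itemize}
Iterating the shift terminates at $\{1,1+b,\dots,1+(k-1)b\}$ without ever decreasing the objective. This yields the formula claimed in Lemma~\ref{lem:normalized-sensitivity}.

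I expect the main obstacle to be Step 3, specifically the within-suffix pairs. A naive single-index move creates competing effects on the two sides of the moved index, and the fix is to shift whole suffixes. The degenerate cases $\lambda=0$ (where $\widetilde C$ is the identity) and the boundary where fewer than $k$ participations fit in the horizon should be handled separately, but both are routine.
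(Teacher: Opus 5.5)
Your proposal is correct and follows essentially the same route as the paper. Both arguments reduce the sensitivity to maximizing nonnegative Gram sums over $b$-separated index sets and use $\langle \widetilde C_{:,i},\widetilde C_{:,j}\rangle=\lambda^{j-i}d_j/d_i$. Both then justify shifting a block leftward via two facts: moving only the later index left never decreases an entry (your prefix--suffix pairs), and moving both indices left never decreases an entry (your within-suffix pairs, equivalent to the paper's $d_{j-1}d_i\ge d_{i-1}d_j$, which your monotonicity of $h_g(N)$ establishes). Your explicit suffix-shift bookkeeping simply makes precise the paper's brief ``the entire block can be shifted left'' step.
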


For the amplified version, there are no such restrictions: the Balls-in-Bins accountant
applies to any positive lower-triangular matrix, which includes the (normalized)
\method matrices. 

We show numerically that the column normalization leads to the improvement in RMSE metric, see Figure \ref{fig:rmse_normalized_ratio}.

\begin{restatable}{lemma}{rmseComparisonLemma}
\label{lem:rmse-comparison}
For any $\lambda \in (0,1)$ under the single-participation model, the RMSE of column-normalized \method is smaller than that of the \method.
\end{restatable}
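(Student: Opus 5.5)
We would compute both RMSEs in closed form and reduce the claim to an explicit scalar inequality in $\lambda$ and $n$.

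\textbf{Sensitivities.} Under single participation ($k=1$), Theorem~\ref{thm:sensitivity} and Lemma~\ref{lem:normalized-sensitivity} reduce the sensitivity to the norm of the first column. For $C_\lambda$ the column norms are
\[
d_j=\sqrt{\tfrac{1-\lambda^{2(n-j+1)}}{1-\lambda^2}},
\]
which strictly decrease in $j$. So $\mathrm{sens}(C_\lambda)=d_1$, while every column of $C_\lambda D^{-1}$ has unit norm and $\mathrm{sens}(C_\lambda D^{-1})=1$. The common factor $1/\sqrt n$ cancels, so it suffices to prove
\[
\|A D C_\lambda^{-1}\|_F^2 \;<\; d_1^2\,\|A C_\lambda^{-1}\|_F^2 .
\]
This uses $(C_\lambda D^{-1})^{-1}=DC_\lambda^{-1}$, the identity from the column-normalization proposition.

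\textbf{Entries of the two left factors.} Since $C_\lambda^{-1}$ is lower bidiagonal, we compute column by column.
\begin{itemize}
\item In $AC_\lambda^{-1}$, column $j$ has a $1$ on the diagonal and $1-\lambda$ in each of the $n-j$ rows below it.
\item In $ADC_\lambda^{-1}$, column $j$ has $d_j$ on the diagonal and $e_j:=d_j-\lambda d_{j+1}$ below it, with the convention $d_{n+1}=0$.
\end{itemize}
The recursion $d_j^2=1+\lambda^2 d_{j+1}^2$ gives the convenient identity $e_j=1/(d_j+\lambda d_{j+1})$. The target inequality becomes
\[
\sum_{j=1}^n\Big(d_j^2+\frac{n-j}{(d_j+\lambda d_{j+1})^2}\Big)\;<\; d_1^2\Big(n+(1-\lambda)^2\tfrac{n(n-1)}{2}\Big).
\]

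\textbf{Main obstacle.} This inequality does not hold term by term. The diagonal part is fine: $d_j^2\le d_1^2$, strictly for $j\ge2$. The off-diagonal part is not: $e_1^2$ slightly exceeds $d_1^2(1-\lambda)^2$, since $d_1^2(1-\lambda^2)=1-\lambda^{2n}<1$. Near $j=n$ the ratio $e_j/(d_1(1-\lambda))$ can even be large, though there the weight $n-j$ is small.

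My plan is to split the off-diagonal sum by the remaining length $m=n-j+1$. I would bound
\[
e_j^2\le \frac{1}{(1+\lambda)^2 d_{j+1}^2}=\frac{1-\lambda}{(1+\lambda)\big(1-\lambda^{2(m-1)}\big)},
\]
and expand $1/(1-\lambda^{2(m-1)})=1+O(\lambda^{2(m-1)})$. The excess of the off-diagonal sum over $d_1^2(1-\lambda)^2\tfrac{n(n-1)}2$ is then a geometric-type sum of order $\sum_m (m-1)\lambda^{2(m-1)}$ plus an $O(n\lambda^{2n})$ term. I would then show this excess is dominated by the diagonal saving
\[
\sum_j (d_1^2-d_j^2)=\sum_{m}\frac{\lambda^{2m}-\lambda^{2n}}{1-\lambda^2}.
\]
This comparison of two explicit sums, uniformly in $n\ge2$ and $\lambda\in(0,1)$, is where I expect the real work. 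The bound on $e_j^2$ may be too lossy for small $m$, so I would first try the exact value of $e_j$ for the last few columns ($m\le2$) and the bound only for the rest. If the comparison is still not clean, the fallback is to treat the difference of the two sides as a function of $\lambda$, check that it vanishes at $\lambda=0$, and show it is strictly negative on $(0,1)$ via its derivative, using the closed forms above.
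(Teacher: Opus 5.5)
Your reduction is the same as the paper's. You have the sensitivities $d_1$ and $1$, the two left factors $AC_\lambda^{-1}$ and $ADC_\lambda^{-1}$, and the target $\|ADC_\lambda^{-1}\|_F^2<d_1^2\|AC_\lambda^{-1}\|_F^2$. Your diagnosis is also right: the off-diagonal terms must borrow from the diagonal saving $\sum_j(d_1^2-d_j^2)$.

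\textbf{The gap.} The proposal stops exactly at the inequality that is the content of the lemma. You announce that the off-diagonal excess is dominated by the diagonal saving, but you do not prove it, and the sketch would not prove it as written. The expansion $1/(1-\lambda^{2p})=1+O(\lambda^{2p})$ has implied constant $1/(1-\lambda^{2p})$, which blows up as $\lambda\to1$. Only the prefactor $(1-\lambda)/(1+\lambda)$ tames it, and an order-of-magnitude sketch hides exactly that. Also, the $\lambda^{2n}$ contribution is $\frac{1-\lambda}{1+\lambda}\lambda^{2n}\binom n2$, which is quadratic in $n$, not $O(n\lambda^{2n})$. So the uniform claim over all $n\ge2$ and $\lambda\in(0,1)$ is left unsupported.

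\textbf{How to close it with your bound.} Your bound $e_j^2\le 1/((1+\lambda)^2d_{j+1}^2)$ suffices once you keep the constants. With $p=n-j$, it bounds the excess of the off-diagonal sum over $d_1^2(1-\lambda)^2\binom n2$ by
\[
E=\frac{1-\lambda}{1+\lambda}\sum_{p=1}^{n-1}p\Big(\frac{\lambda^{2p}}{1-\lambda^{2p}}+\lambda^{2n}\Big).
\]
Your diagonal saving simplifies to $S=\sum_{p=1}^{n-1}p\lambda^{2p}$. For $p\le n-1$ you have $1-\lambda^{2p}\ge(1-\lambda)(1+\lambda)$ and $\lambda^{2n}\le\lambda^2\lambda^{2p}$. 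Hence the $p$-th term of $E$ is at most
\[
p\lambda^{2p}\,\frac{1+\lambda^2-\lambda^4}{(1+\lambda)^2}<p\lambda^{2p}.
\]
So $E<S$, and then $\|ADC_\lambda^{-1}\|_F^2\le (nd_1^2-S)+d_1^2(1-\lambda)^2\binom n2+E<d_1^2\|AC_\lambda^{-1}\|_F^2$. No separate treatment of small $m$ and no derivative fallback is needed.

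\textbf{The paper's route.} The paper avoids even this estimate by bounding the cross term differently. From $d_j^2=d_{j+1}^2+\lambda^{2(n-j)}$ and $d_j\ge d_{j+1}$ it gets $(d_j-\lambda d_{j+1})^2\le(1-\lambda)^2d_{j+1}^2+\lambda^{2(n-j)}$. It then uses $d_{j+1}\le d_1$, and the additive terms sum to $\sum_p p\lambda^{2p}$. This cancels the diagonal saving exactly, so the comparison you were worried about disappears.
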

\section{Full Experiment Details}

\begin{table}[t]
\centering
\caption{Runtime comparison (seconds) across models and bandwidths for CIFAR-10 training over one epoch. CNN, WideResNet-40, and ViT-B/16 are trained using 8 CPU cores and an NVIDIA A100 GPU. We use a batch size of 512 with a physical batch size of 64. For ViT-L/16 we use NVIDIA H100 GPU with 64 CPU cores and physical batch size 16.}
\label{tab:runtime_comparison}
\resizebox{\textwidth}{!}{
\begin{tabular}{lccccccccc}
\hline
 & \multicolumn{3}{c}{\textbf{DP-SGD}} 
 & \multicolumn{3}{c}{\textbf{GPU Noise Regeneration}} 
 & \multicolumn{3}{c}{\textbf{CPU Noise Buffer}} \\
\cline{2-10}
\textbf{Model} 
 &   &   & 
 & 2 & 4 & 16 
 & 2 & 4 & 16 \\
\hline
CNN (0.3M)
 & & $18.9 \pm 0.7$  &  
 & $18.8 \pm 0.7$ & $18.9 \pm 0.5$ & $20.1 \pm 0.8$
 & $18.6 \pm 0.7$ & $18.9 \pm 0.4$ & $19.4 \pm 0.7$ \\
 
WideResNet-40 (9M)
 & & $133.7 \pm 2.2$ & 
 & $133.5 \pm 1.2$ & $136.9 \pm 1.3$ & $144.1 \pm 1.0$
 & $143.5 \pm 2.3$ & $143.1 \pm 1.5$ & $149.6 \pm 1.9$ \\

ViT-B/16 (86M)
 & & $825.3 \pm 1.8$   &  
 & $826.9 \pm 3.2$ & $829.2 \pm 1.7$ & $842.9 \pm 2.8$
 & $886.9 \pm 2.0$ & $897.9 \pm 3.0$ & $987.2 \pm 0.4$ \\

ViT-L/16 (307M)
 & & $1357.9 \pm 1.9$ &
 & $1363.2 \pm 4.2$ & $1364.7 \pm 4.6$ & $1367.3 \pm 2.2$
 & $1507.6 \pm 3.1$ & $1543.3 \pm 6.3$ & $1842.7 \pm 12.9$ \\

ViT-H/14 (632M)
 & & $3252.5 \pm 3.8$ &
 & $3275.7 \pm 6.2$ & $3288.6 \pm 4.6$ & $3290.8 \pm 8.5$
 & $3544.2 \pm 3.1$ & $3598.9 \pm 6.7$ & $4068.9 \pm 7.2$ \\
\hline
\end{tabular}
}
\end{table}

\begin{table}[t]
\centering
\caption{Runtime comparison (seconds) across models and noise-buffer sizes for IMDB fine-tuning over one epoch. We use batch size $128$. LSTM, BERT-Tiny, and BERT-Base are run on an NVIDIA A100 with 64 CPU cores and physical batch size $64$; BERT-Large is run on an NVIDIA H100 with 64 CPU cores with physical batch size $16$.}
\label{tab:runtime_comparison_imdb}
\resizebox{\textwidth}{!}{
\begin{tabular}{lccccccccc}
\hline
 & \multicolumn{3}{c}{\textbf{DP-SGD}} 
 & \multicolumn{3}{c}{\textbf{GPU Noise Regeneration}} 
 & \multicolumn{3}{c}{\textbf{CPU Noise Buffer}} \\
\cline{2-10}
\textbf{Model} 
 &   &   & 
 & 2 & 4 & 16 
 & 2 & 4 & 16 \\
\hline

LSTM (1M)
 & & $310.2 \pm 3.8$ &
 & $317.1 \pm 1.3$ & $317.7 \pm 5.8$ & $318.8 \pm 3.6$
 & $322.4 \pm 3.7$ & $322.8 \pm 3.1$ & $325.9 \pm 3.0$ \\

BERT-Tiny (3M)
 & & $14.6 \pm 0.3$ &
 & $15.8 \pm 0.5$ & $15.6 \pm 0.3$ & $20.8 \pm 0.6$
 & $21.4 \pm 0.4$ & $22.1 \pm 0.3$ & $25.8 \pm 0.4$ \\

BERT-Base (110M)
 & & $334.3 \pm 0.8$ &
 & $339.1 \pm 0.5$ & $342.1 \pm 1.6$ & $369.7 \pm 1.0$
 & $543.3 \pm 3.2$ & $554.4 \pm 4.8$ & $671.7 \pm 4.5$ \\

BERT-Large (340M)
 & & $499.0 \pm 0.1$ &
 & $508.7 \pm 2.0$ & $513.0 \pm 1.5$ & $539.2 \pm 1.7$
 & $846.9 \pm 3.2$ & $859.4 \pm 3.9$ & $1236.5 \pm 17.5$ \\

\hline
\end{tabular}
}
\end{table}

\begin{table}[t]
\caption{Hyperparameters for the experiments in Figure~\ref{fig:method_comparison_cifar_10}, for each privacy budget $\epsilon$: batch size $B$, clipping norm $\zeta$, and method-specific $\lambda$ and learning rate (lr). Hyperparameters are chosen based on performance on the validation set.
}
\centering
\small
\setlength{\tabcolsep}{4pt}
\begin{tabular}{lcccccccccccccc}
\toprule
& & &
\multicolumn{2}{c}{$\epsilon=0.25$}
& \multicolumn{2}{c}{$\epsilon=0.5$}
& \multicolumn{2}{c}{$\epsilon=1$}
& \multicolumn{2}{c}{$\epsilon=2$}
& \multicolumn{2}{c}{$\epsilon=4$}
& \multicolumn{2}{c}{$\epsilon=8$} \\
\cmidrule(lr){4-5}\cmidrule(lr){6-7}\cmidrule(lr){8-9}\cmidrule(lr){10-11}\cmidrule(lr){12-13}\cmidrule(lr){14-15}
Method & $B$ & $\zeta$
& $\lambda$ & lr
& $\lambda$ & lr
& $\lambda$ & lr
& $\lambda$ & lr
& $\lambda$ & lr
& $\lambda$ & lr \\
\midrule
DP-SGD
& 128 & 8
& 0 & 0.05
& 0 & 0.08
& 0 & 0.2
& 0 & 0.2
& 0 & 0.2
& 0 & 0.2 \\

DP-$\lambda$CGD
& 128 & 8
& 0.8  & 0.05
& 0.9  & 0.08
& 0.9  & 0.2
& 0.9  & 0.2
& 0.95 & 0.5
& 0.95 & 0.5 \\

BandInvMF ($p=2$)
& 128 & 8
& 0.977 & 0.02
& 0.977 & 0.05
& 0.977 & 0.08
& 0.977 & 0.2
& 0.977 & 0.2
& 0.977 & 0.5 \\

BandInvMF ($p=16$)
& 128 & 8
& --- & 0.02
& --- & 0.08
& --- & 0.08
& --- & 0.2
& --- & 0.5
& --- & 0.5 \\

BISR ($p=2$)
& 128 & 8
& 0.5 & 0.05
& 0.5 & 0.08
& 0.5 & 0.1
& 0.5 & 0.1
& 0.5 & 0.2
& 0.5 & 0.2 \\

BISR ($p=16$)
& 128 & 8
& --- & 0.05
& --- & 0.1
& --- & 0.2
& --- & 0.2
& --- & 0.5
& --- & 0.5 \\

BLT
& 128 & 8
& --- & 0.05
& --- & 0.1
& --- & 0.2
& --- & 0.5
& --- & 0.5
& --- & 0.5 \\
\bottomrule
\end{tabular}

\label{tab:hyperparams}
\end{table}

\begin{table}[t]
\centering
\caption{RMSE for different mechanisms across privacy budgets $\epsilon$ in the setting of training on CIFAR-10 with batch size $128$ over $k=10$ epochs. The first column reports results without amplification by subsampling; in this setting, BandMF \citep{scalingmckenna2024} with full bandwidth achieves the best performance. When Balls-in-Bins amplification via subsampling is applied, other methods perform better, including \method, BLT \citep{dvijotham2024efficient}, and BISR \citep{kalinin2025back} with bandwidth $p=16$. In the high-privacy regime, subsampling substantially affects RMSE, and methods that account for amplification, especially \method, benefit the most.}
\label{tab:eps_all}
\begin{tabular}{lccccccc}
\toprule
Method & $\epsilon=8$ (w/o Amp) & $\epsilon=8$ & $\epsilon=4$ & $\epsilon=2$ & $\epsilon=1$ & $\epsilon=0.5$ & $\epsilon=0.25$ \\
\midrule
BandInvMF ($p=2$)  & 12.69 & 9.70  & 14.90 & 24.41 & 43.66 & 82.77 & 125.87 \\
BandInvMF ($p=4$)  & 10.27 & 8.07  & 12.48 & 20.76 & 37.37 & 70.90 & 130.48 \\
BandInvMF ($p=16$) & 8.54  & 6.56  & 11.23 & 20.17 & 37.09 & 71.08 & 130.90 \\
BandInvMF ($p=64$) & 8.15  & 6.39  & 11.36 & 20.87 & 39.13 & 74.06 & 140.38 \\
BandInvMF ($p=b$)  & 7.87  & 5.66  & 9.86  & 17.76 & 33.25 & 62.84 & 119.20 \\
\addlinespace
BLT & 8.14 & 5.87 & 10.67 & 19.50 & 35.97 & 69.41 & 127.69 \\
\addlinespace
DP-$\lambda$CGD ($\lambda=0.975$) & 12.73 & 9.33  & 14.03 & 21.45 & 36.40 & 68.37 & 125.69 \\
DP-$\lambda$CGD ($\lambda=0.95$)  & 14.74 & 10.27 & 15.28 & 21.45 & 31.70 & 56.75 & 103.44 \\
DP-$\lambda$CGD ($\lambda=0.9$)   & 19.72 & 13.25 & 18.42 & 24.73 & 33.66 & 54.33 & 97.73 \\
\addlinespace
BISR ($p=2$)  & 48.45 & 30.52 & 40.58 & 49.44 & 58.16 & 71.20 & 105.09 \\
BISR ($p=4$)  & 33.47 & 21.00 & 27.97 & 34.87 & 42.50 & 58.27 & 98.52 \\
BISR ($p=16$) & 17.95 & 11.50 & 15.75 & 21.10 & 20.25 & 52.81 & 96.45 \\
BISR ($p=64$) & 10.50 & 6.81  & 9.87  & 15.92 & 28.80 & 53.89 & 101.90 \\
BISR ($p=b$)  & 8.45  & 6.34  & 11.30 & 20.77 & 38.96 & 73.70 & 139.68 \\
\addlinespace
BSR ($p=2$)  & 62.51 & 39.65 & 51.87 & 62.97 & 74.14 & 88.33 & 116.43 \\
BSR ($p=4$)  & 46.80 & 29.29 & 39.32 & 48.13 & 57.03 & 70.47 & 104.90 \\
BSR ($p=16$) & 26.27 & 16.88 & 22.41 & 28.59 & 36.78 & 55.03 & 97.09 \\
BSR ($p=64$) & 14.89 & 9.62  & 13.32 & 18.88 & 29.34 & 52.16 & 97.51 \\
BSR ($p=b$)  & 8.15  & 5.49  & 9.31  & 17.06 & 31.95 & 60.43 & 114.52 \\
\addlinespace
BandMF ($p=2$)  & 59.44 & 37.67 & 49.08 & 59.95 & 71.17 & 86.25 & 114.94 \\
BandMF ($p=4$)  & 42.29 & 27.00 & 36.99 & 45.90 & 55.35 & 68.64 & 104.42 \\
BandMF ($p=16$) & 22.05 & 15.00 & 21.20 & 28.13 & 37.06 & 56.29 & 99.33 \\
BandMF ($p=64$) & 12.58 & 8.77  & 13.25 & 19.76 & 31.40 & 56.07 & 104.68 \\
BandMF ($p=b$)  & 7.77  & 5.84  & 10.42 & 19.16 & 35.94 & 68.02 & 128.79 \\
\addlinespace
DP-SGD & 83.85 & 21.82 & 26.27 & 31.68 & 40.10 & 59.17 & 100.27 \\
\bottomrule
\end{tabular}%
\end{table}

\begin{table}[t]
\caption{Time to generate and correlate the noise at the final ($n$th) iteration and buffer size to produce correlated noise in dimension $d$. For BLT, BSR, and BandMF, regeneration requires replaying from iteration 1, so the last-iteration cost is $nd$. All methods also require correlating noise by summing $p$ vectors of size $d$ ($+pd$ in the runtime).}
\centering
\small
\setlength{\tabcolsep}{4.5pt}
\renewcommand{\arraystretch}{1.15}
\begin{tabular}{lcccc}
\toprule
& \multicolumn{2}{c}{Time per iter. Gen + Corr} & \multicolumn{2}{c}{Buffer size} \\
\cmidrule(lr){2-3}\cmidrule(lr){4-5}
Method & w/o regen. & w/ regen. & w/o regen. & w/ regen. \\
\midrule
DP-SGD     & $d + d$  & $d + d$  & --        & -- \\
\method    & $d + 2d$  & $2d + 2d$ & $d$       & -- \\
BandInvMF  & $d + pd$  & $pd+ pd$ & $(p-1)d$  & -- \\
BISR       & $d+ pd$  & $pd+ pd$ & $(p-1)d$  & -- \\
BLT        & $d+ pd$  & $\textcolor{red}{n}d+ pd$ & $(p-1)d$  & -- \\
BSR        & $d+ pd$  & $\textcolor{red}{n}d+ pd$ & $(p-1)d$  & -- \\
BandMF     & $d+ pd$  & $\textcolor{red}{n}d+ pd$ & $(p-1)d$  & -- \\
\bottomrule
\end{tabular}
\label{tab:noise_time_buffer}
\end{table}

\begin{figure*}[t]
    \centering
    \begin{subfigure}{0.30\linewidth}
        \centering
        \includegraphics[width=\linewidth]{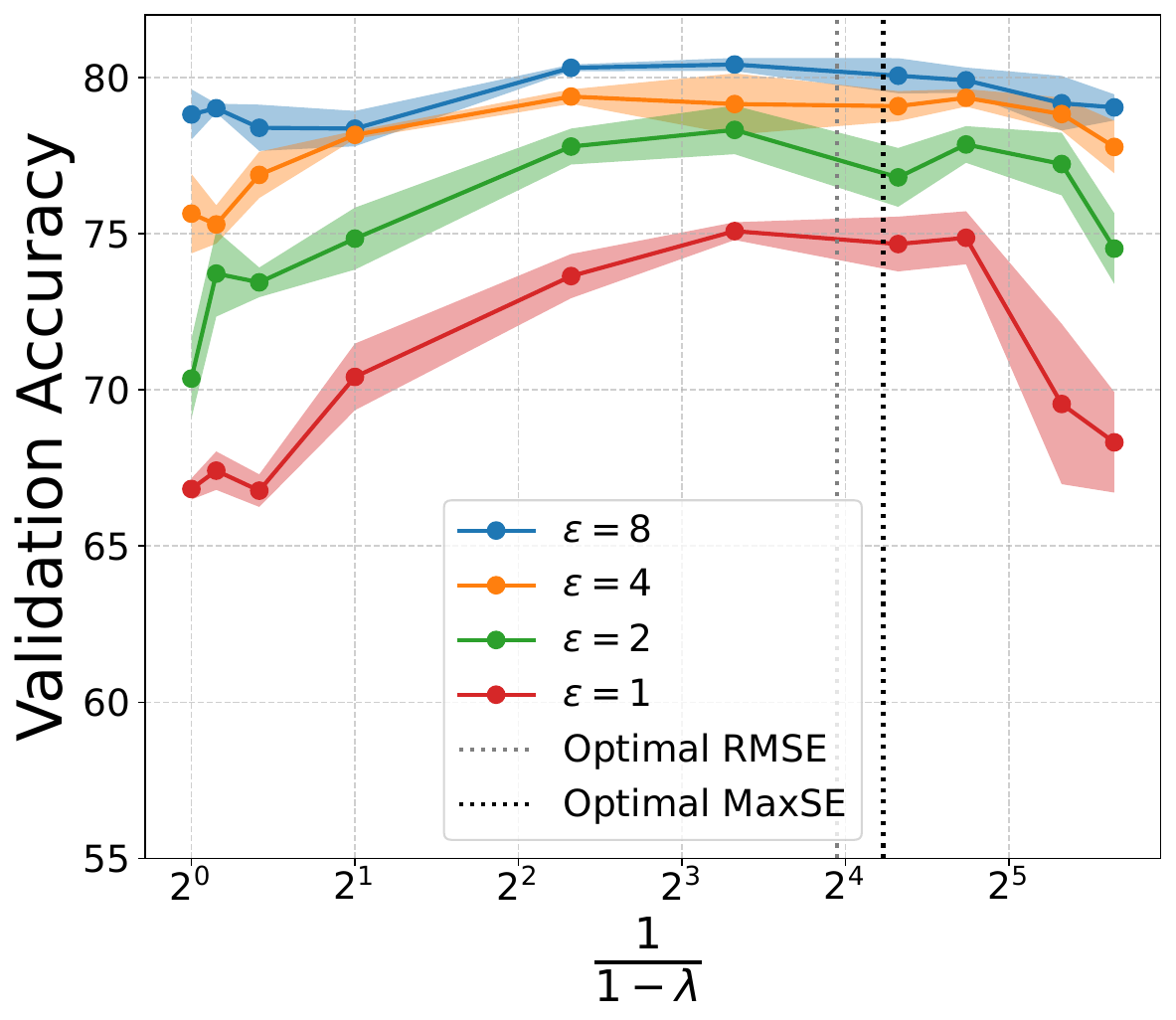}
        \caption{IMDB:  $k=20$}
        \label{fig:imdb128} 
    \end{subfigure}
    \begin{subfigure}{0.30\linewidth}
        \centering
        \includegraphics[width=\linewidth]{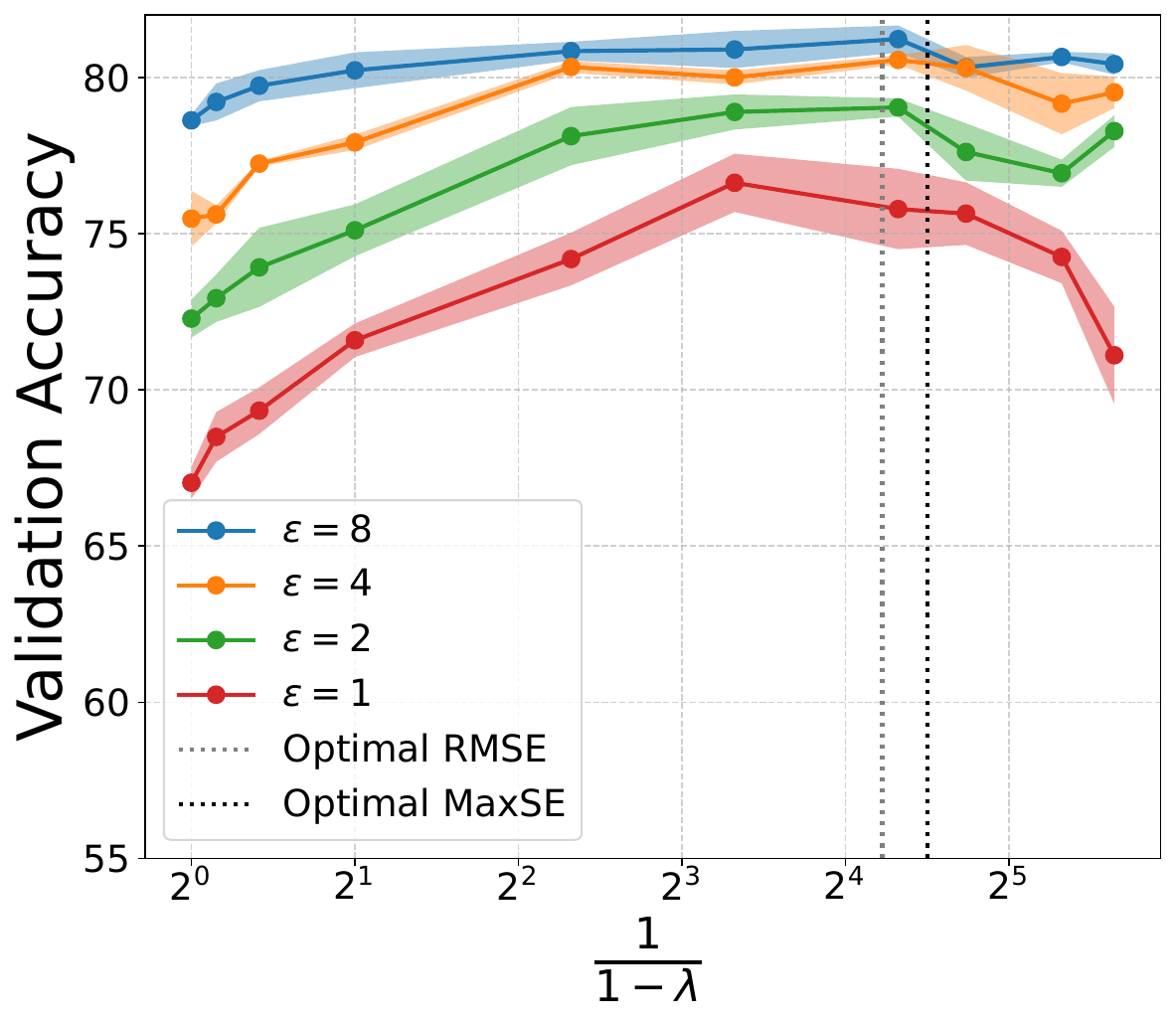}
        \caption{IMDB:  $k=40$}
    \end{subfigure}
    \begin{subfigure}{0.30\linewidth}
        \centering
        \includegraphics[width=\linewidth]{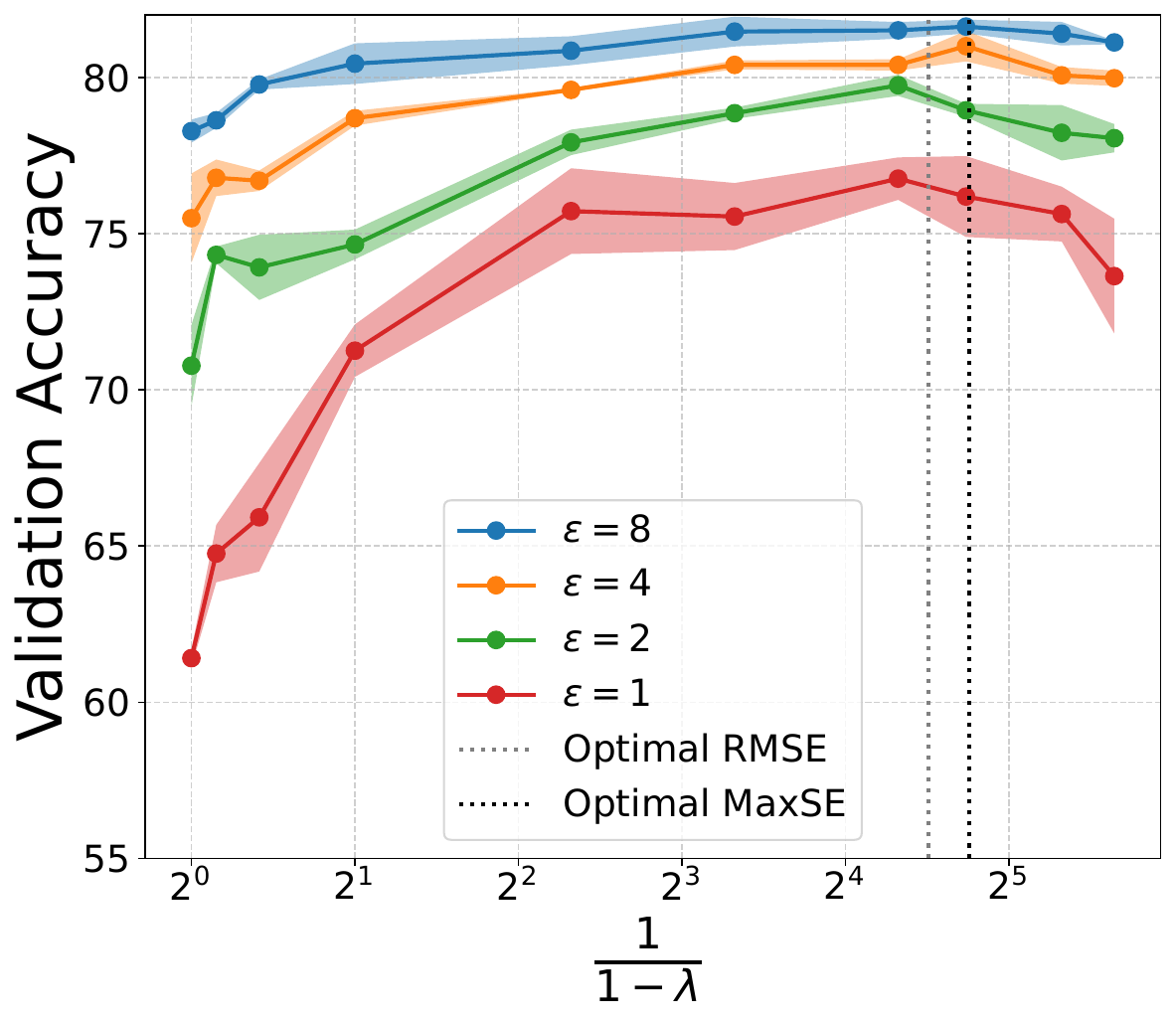}
        \caption{IMDB:  $k=80$}
    \end{subfigure}

    \begin{subfigure}{0.30\linewidth}
        \centering
        \includegraphics[width=\linewidth]{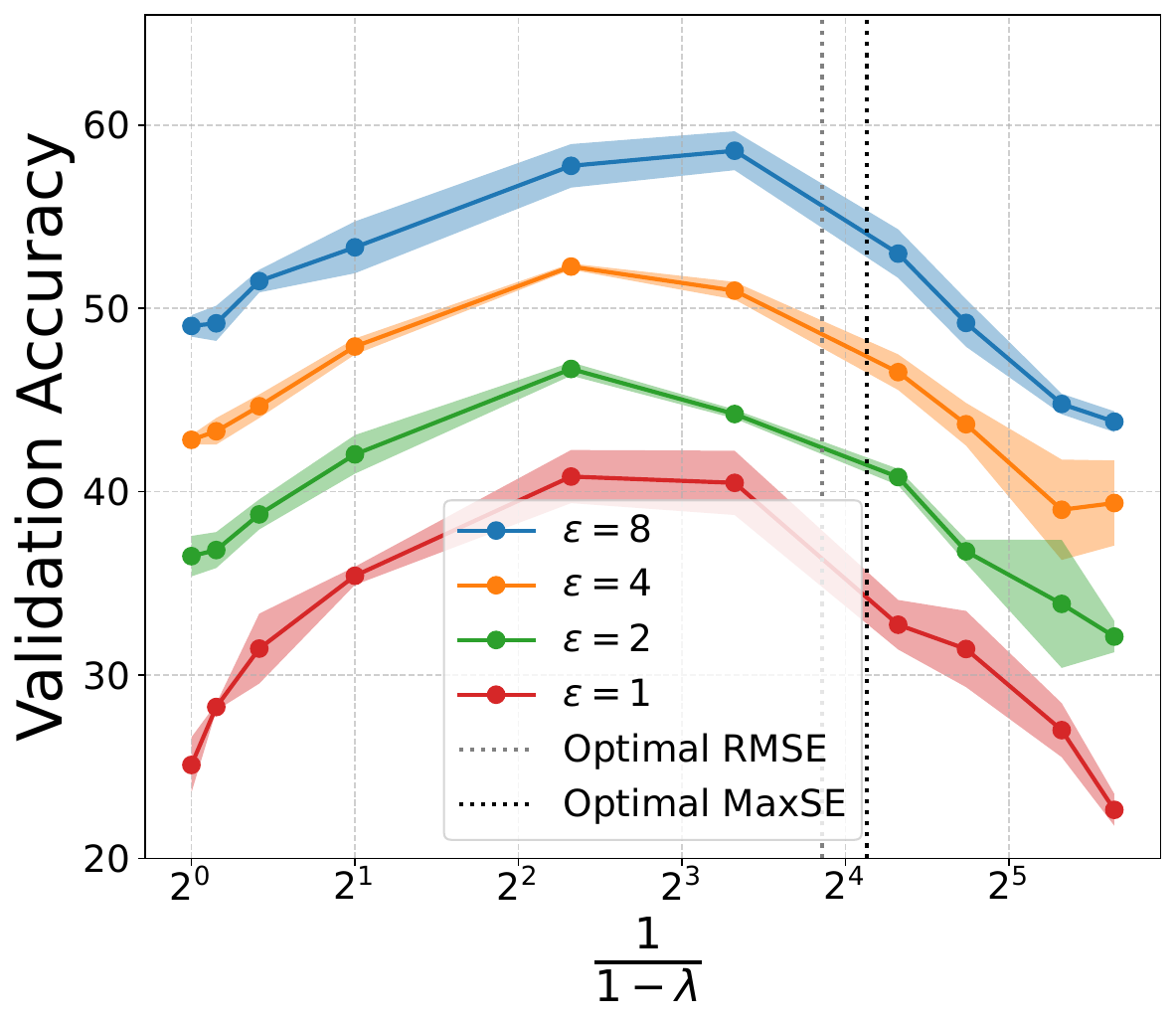}
        \caption{CIFAR-10:  $k=20$}
        \label{fig:imdb128} 
    \end{subfigure}
    \begin{subfigure}{0.30\linewidth}
        \centering
        \includegraphics[width=\linewidth]{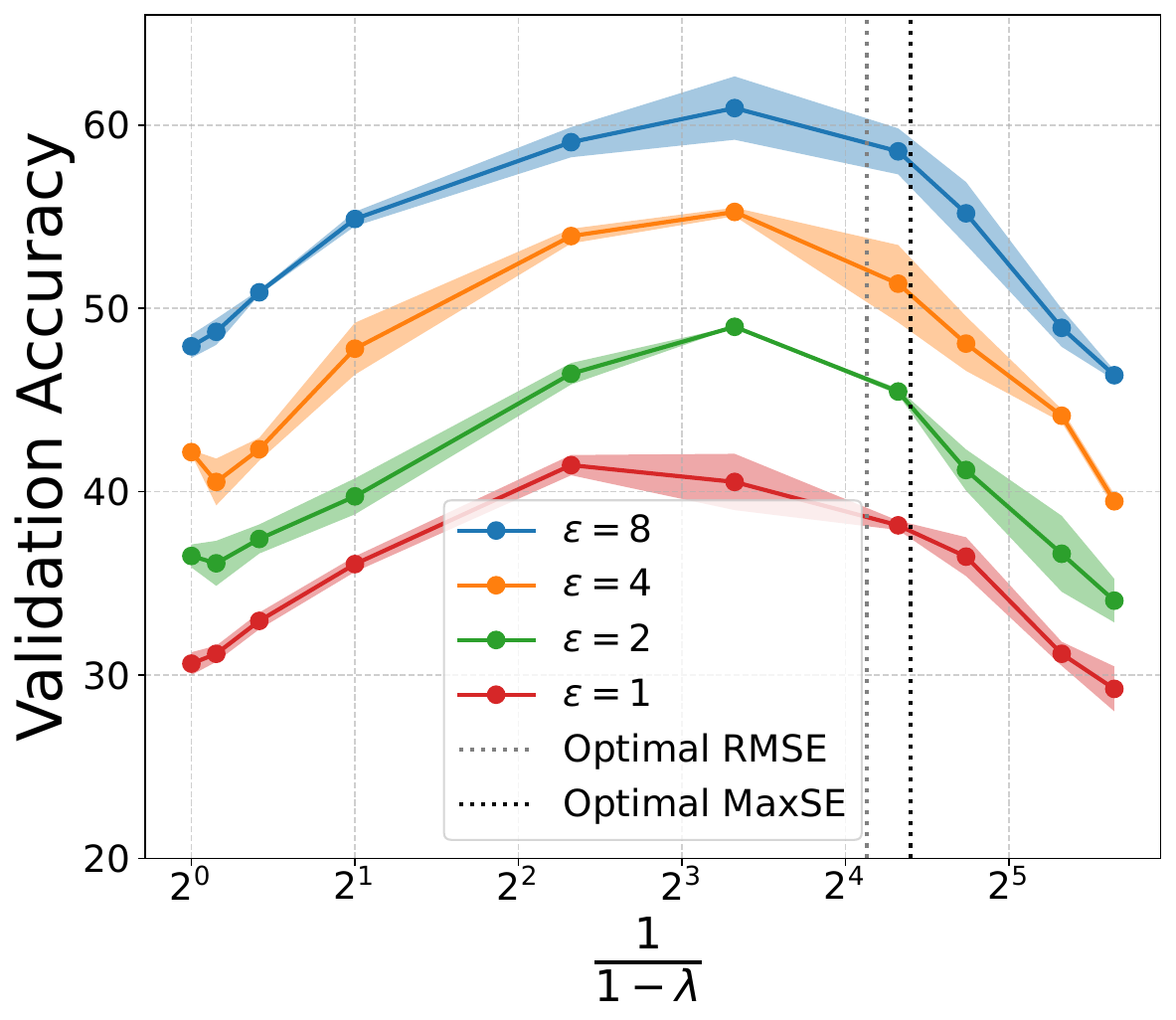}
        \caption{CIFAR-10: $k=40$}
    \end{subfigure}
    \begin{subfigure}{0.30\linewidth}
        \centering
        \includegraphics[width=\linewidth]{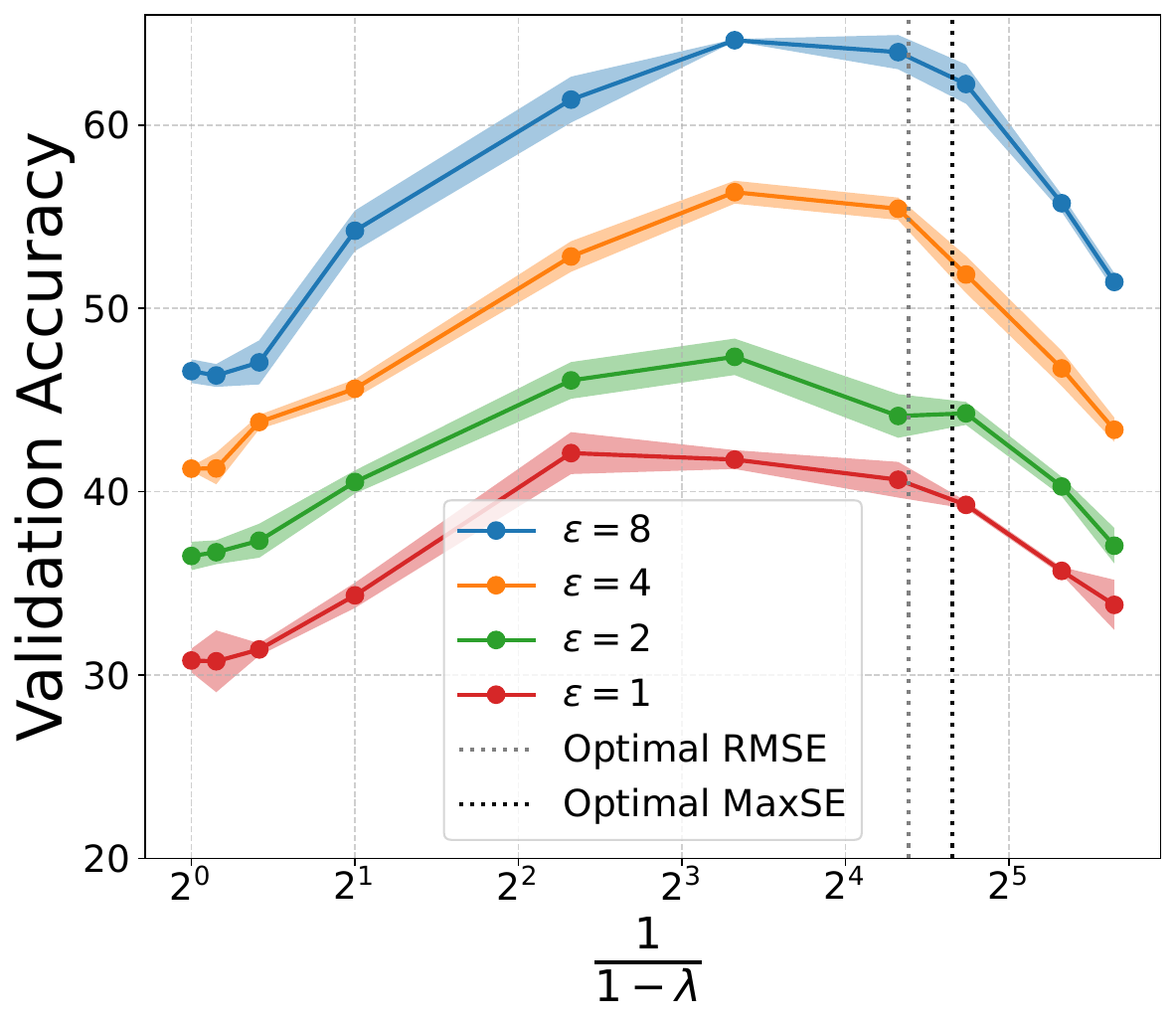}
        \caption{CIFAR-10: $k=80$}
    \end{subfigure}

    \caption{Validation accuracy with error bars based on three runs for fine-tuning BERT-tiny on the IMDB dataset and CNN on CIFAR-10.  In these runs, we fix the batch size $B$ to be $512$ for IMDb and $1024$ for CIFAR-10, we also fix the number of iterations per epoch, $b = |D|/B$, and vary the number of epochs $k$. We observe improved utility compared to Figure~\ref{fig:choice_of_lambda}, suggesting that using a larger batch size and training for longer is preferable for overall accuracy. We also find that, in this regime, the optimal mechanism is not DP-SGD. In fact, the trend is the opposite: as the number of epochs increases, the optimal $\lambda$ also increases. This suggests that one can benefit from \method even in long training runs, which are common for DP-SGD. 
   }
    \label{fig:experiments_fixed_batch_size}
\end{figure*}
\clearpage
\begin{figure}[t]
    \centering
    \includegraphics[width=0.4\linewidth]{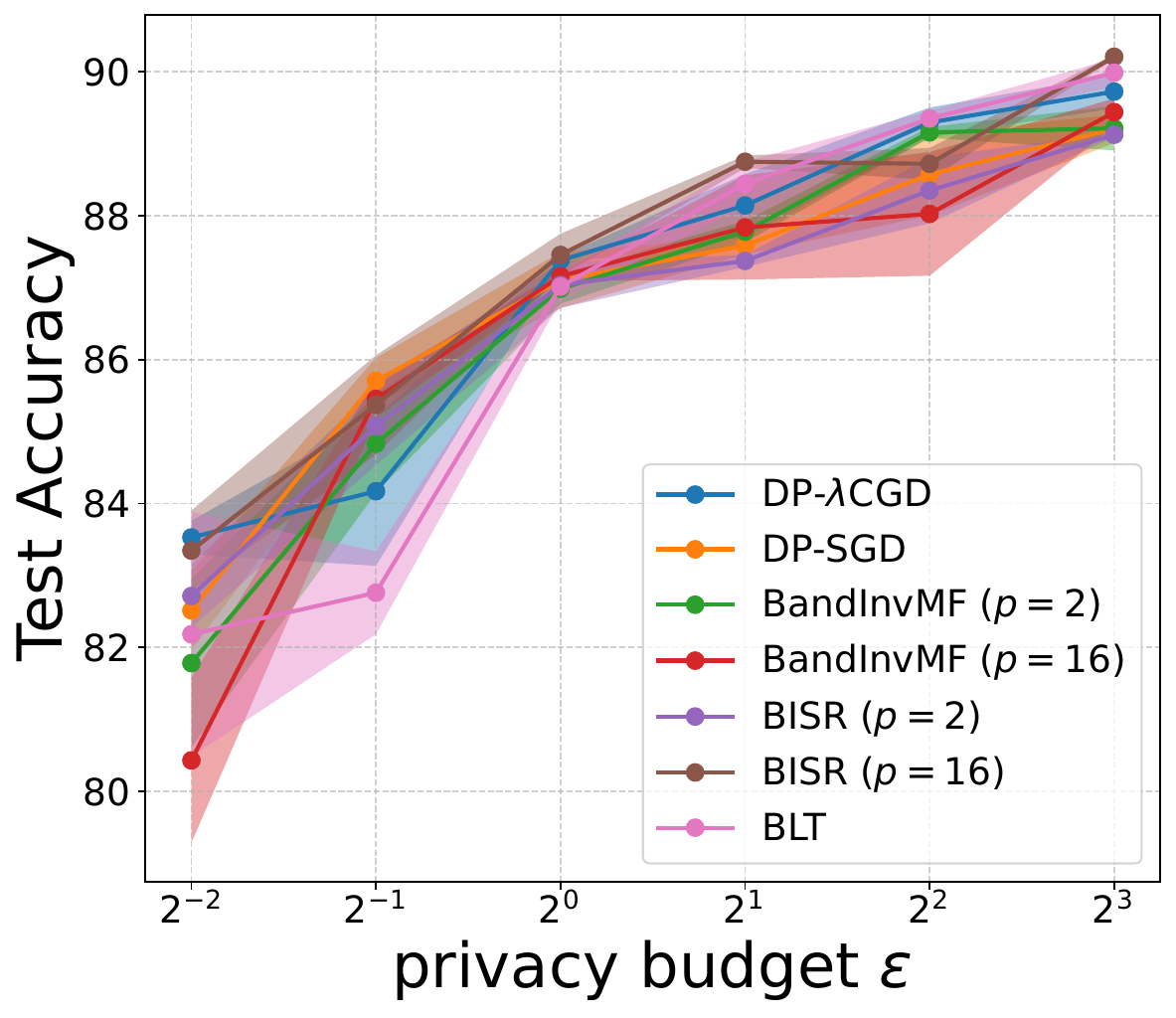}
    \caption{Test accuracy of BERT-base (100M) fine-tuned on IMDb for sentiment analysis, trained with batch size $512$ for $10$ epochs. Our proposed \method outperforms DP-SGD at most privacy levels, achieves the best performance under very strict privacy ($\epsilon=1/4$), and performs on par with BLT and BISR factorization.
}
    \label{fig:test_accuracy_imdb}
\end{figure}

\section{Proofs}

\MaxSEBound*
\begin{proof}
First, we compute the squared sensitivity of the matrix $C_{\lambda}$:
\begin{align}
    \mathrm{sens}_{k,b}^2(C_{\lambda})
    &= \left\|\sum\limits_{j = 0}^{k - 1}(C_\lambda)_{:, jb + 1}\right\|_2^2
    = \sum\limits_{j = 0}^{k - 1}\sum\limits_{r = 0}^{b - 1}
      \left(\sum\limits_{m = 0}^{j}\lambda^{jb + r - mb}\right)^2  \\
    &= \sum\limits_{j = 0}^{k - 1}\sum\limits_{r = 0}^{b - 1}
      \lambda^{2r} \left(\frac{1 - \lambda^{b(j + 1)}}{1 - \lambda^b}\right)^2
     = \frac{1 - \lambda^{2b}}{(1- \lambda^2)(1 - \lambda^b)^2}\sum\limits_{j = 1}^{k}(1 - \lambda^{bj})^2.
\end{align}

The squared maximum row $\ell_2$-norm of the matrix $AC_{\lambda}^{-1}$ is
\begin{equation}
    \|AC_{\lambda}^{-1}\|_{2\to \infty}^2 = 1 + (1 - \lambda)^2(n - 1).
\end{equation}

Combining these expressions gives the squared maximum error. To prove the claimed bound, we consider two regimes.

First, suppose $k \ge \sqrt{n}$ and set $\lambda = e^{-1/b}$. This yields
\begin{align}
    \|AC_{\lambda}^{-1}\|_{2\to \infty}^2 \cdot \mathrm{sens}_{k,b}^2(C_{\lambda}) &= \left(1 + \frac{n - 1}{b^2} + O(nb^{-3})\right) \left(\frac{1 - e^{-2}}{(2b^{-1} + O(b^{-2}))(1 - e^{-1})^2} (k + O(1))\right)\\
    &= O\left(\frac{nkb}{b^2}\right) = O(k^2),
\end{align}
assuming $n=kb$.

If $k < \sqrt{n}$, take $\lambda = e^{-1/\sqrt{n}}$. Then $\lambda^b = e^{-\sqrt{n}/k} < 1/e = O(1)$ and $1 -\lambda^{b} = \Theta(1)$, 
\begin{equation}
    \|AC_{\lambda}^{-1}\|_{2\to \infty}^2 \cdot \mathrm{sens}_{k,b}^2(C_{\lambda}) = O(1) \left(\frac{O(1)}{2/\sqrt{n} + O(1/n)}O(k)\right) = O(k\sqrt{n}).
\end{equation}

Combining the two bounds and taking square roots yields the claimed upper bound.
\end{proof}

\MaxSEvsRMSEMinimizerInequality*
\begin{proof}
Define, for $\gamma>0$,
\begin{equation}
    F_{\gamma}(\lambda) = \bigl(1 + (1-\lambda)^2 (n-1)\gamma\bigr)\,\mathrm{sens}^2(C_{\lambda}).
\end{equation}
Then squared MaxSE is given by $F_1(\lambda)$ and squared RMSE is given by $F_{1/2}(\lambda)$.
We claim that $\frac{F_1(\lambda)}{F_{1/2}(\lambda)}$ is a monotonically decreasing function of $\lambda$.
Indeed, for $\lambda\in(0,1)$:
\begin{equation}
\frac{d}{d\lambda}\left(\frac{F_1(\lambda)}{F_{1/2}(\lambda)}\right)
= \frac{d}{d\lambda}\left(\frac{1 + (1-\lambda)^2(n-1)}{1 + (1-\lambda)^2(n-1)/2}\right)
= -\frac{(n-1)(1-\lambda)}{\left(1+(1-\lambda)^2(n-1)/2\right)^2} < 0.
\end{equation}

Let $\lambda_1$ be a minimizer of $F_1(\lambda)$ and let $\lambda_{1/2}$ be a minimizer of $F_{1/2}(\lambda)$.
Assume for contradiction that $\lambda_1 < \lambda_{1/2}$.
Since $\lambda_1$ and $\lambda_{1/2}$ are minimizers and $\mathrm{sens}(C_{\lambda})>0$, we have
\begin{equation}
    F_{1}(\lambda_{1}) \le F_1(\lambda_{1/2})
    \quad\text{and}\quad
    F_{1/2}(\lambda_{1/2}) \le F_{1/2}(\lambda_{1}).
\end{equation}
Combining these two inequalities gives 
\begin{equation}
    \frac{F_{1}(\lambda_{1})}{F_{1/2}(\lambda_{1})}
    \le
    \frac{F_{1}(\lambda_{1/2})}{F_{1/2}(\lambda_{1/2})}.
\end{equation}
However, the ratio $\frac{F_1(\lambda)}{F_{1/2}(\lambda)}$ is decreasing in $\lambda$, and $\lambda_1 < \lambda_{1/2}$ would imply the reverse inequality, which is a contradiction.
Therefore $\lambda_1 \ge \lambda_{1/2}$.
\end{proof}

\begin{lemma}
\begin{equation}
    \inf_{BC=A}\frac{1}{\sqrt{n}}\|B\|_F \cdot \mathrm{sens}_{n,1}(C)
    \;\ge\;
    \inf_{BC=A}\frac{1}{\sqrt{n}}\|B\|_F \|Ce\|_2
    \;=\;
    \frac{1}{\sqrt{n}}\|Ae\|_2
    \;=\;
    \sqrt{\frac{(n+1)(2n+1)}{6}}.
\end{equation}
\end{lemma}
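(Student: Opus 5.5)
Here $e$ denotes the all-ones vector in $\mathbb{R}^n$. The plan is to prove the three relations from left to right.

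\emph{First inequality.} For $b=1$, $k=n$, every row participates, so one admissible pair of neighbouring gradient matrices $G,G'$ differs by a single data point contributing a unit-norm vector $v$ in every row: $G-G' = e\,v^\top$. Then $\|C(G-G')\|_F = \|Ce\|_2\|v\|_2 = \|Ce\|_2$. Since the sensitivity is a supremum over admissible differences, $\mathrm{sens}_{n,1}(C)\ge\|Ce\|_2$ holds for every $C$ (no Toeplitz or positivity assumption needed). Taking the infimum over $BC=A$ gives the first inequality.

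\emph{Middle equality.} The lower bound follows from Cauchy--Schwarz in the form $\|Ae\|_2=\|BCe\|_2\le\|B\|_{2}\|Ce\|_2\le\|B\|_F\|Ce\|_2$, valid for any factorization. For attainment we need a factorization with equality, i.e.\ $B$ of rank one with $Ce$ aligned to its row space. Exact attainment may fail among invertible $C$, so I would show the infimum equals $\|Ae\|_2/\sqrt n$ by a limiting construction. Let $u = Ae/\|Ae\|_2$. Take $C_\varepsilon$ invertible with $C_\varepsilon e = e/\sqrt n$ dominating and $B_\varepsilon = A C_\varepsilon^{-1}$. Concretely, write $C_\varepsilon = P + \varepsilon Q$, where $P = \tfrac{1}{n}ee^\top$ (so $Pe=e$) and $Q$ is chosen invertible on $e^\perp$ with $Qe=0$. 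Then $C_\varepsilon^{-1} = P + \varepsilon^{-1}Q^{+}$ on the respective subspaces, and $B_\varepsilon = AP + \varepsilon^{-1}AQ^{+}$. The $\varepsilon^{-1}$ term blows up $\|B_\varepsilon\|_F$, so this needs care: the product $\|B\|_F\|Ce\|_2$ is scale-invariant under $B\to tB$, $C\to C/t$. The real fix is to scale $C$'s $e^\perp$ part large instead, taking $C_t = P + tQ$ with $t\to\infty$. Then $B_t = AP + t^{-1}AQ^{+}\to AP = \tfrac1n(Ae)e^\top$, so $\|B_t\|_F\to\|Ae\|_2/\sqrt n$, while $\|C_te\|_2=\|e\|_2=\sqrt n$. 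The product tends to $\|Ae\|_2$, and dividing by $\sqrt n$ gives the claimed infimum.

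\emph{Final equality.} This is a direct computation: $(Ae)_i = i$, so $\|Ae\|_2^2=\sum_{i=1}^n i^2 = n(n+1)(2n+1)/6$. Dividing by $n$ and taking the square root gives $\sqrt{(n+1)(2n+1)/6}$.

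The main obstacle is the attainment part of the middle equality. That step only matters for stating an equality; the downstream lower bound needs just the Cauchy--Schwarz direction. I would therefore present the limiting argument with a concrete choice such as $Q = I - P$, so that $C_t = P + t(I-P)$ and $C_t^{-1} = P + t^{-1}(I-P)$, which makes the computation $\|B_t\|_F^2 = \|AP\|_F^2 + t^{-2}\|A(I-P)\|_F^2$ immediate, using the orthogonality $P(I-P)=0$ and $\operatorname{tr}$ cross terms vanishing.
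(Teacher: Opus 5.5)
Your proof is correct and follows the same argument as the paper. The same single-direction neighbouring pair gives $\mathrm{sens}_{n,1}(C)\ge\|Ce\|_2$, the same chain $\|Ae\|_2\le\|B\|_2\|Ce\|_2\le\|B\|_F\|Ce\|_2$ gives the lower bound, and tightness comes from a limiting family in which $B$ degenerates to rank one. The only difference is the witness family: you take $C_t=P+t(I-P)$ with $P=\tfrac1n ee^\top$, whereas the paper picks a rotation $R$ with $RAe=\|Ae\|_2e_1$ and sets $C_\lambda=D_\lambda^{-1}RA$, $B_\lambda=R^\top D_\lambda$ with $\lambda\to 0$; in both cases $\|B\|_F\|Ce\|_2\to\|Ae\|_2$, and yours avoids the rotation.
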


\begin{proof}

We first consider the sensitivity. By definition,
\begin{equation}
    \mathrm{sens}_{n,1}(C)
    = \sup_{X \sim X'} \|CX - CX'\|_F,
\end{equation}
where neighboring datasets may differ in any coordinates and each
$X_i, X'_i$ has $\ell_2$-norm at most $1$.
To obtain a lower bound, take $X$ and $X'$ that differ in all coordinates
and lie along a single dimension. Then $\|C(X-X')\|_F = \|Ce\|_2$. This gives the first inequality.

Next, for any factorization $A = BC$,
\begin{equation}
    \|Ae\|_2
    = \|BCe\|_2
    \le \|B\|_2 \|Ce\|_2
    \le \|B\|_F \|Ce\|_2.
\end{equation}
Thus
\begin{equation}
    \frac{1}{\sqrt{n}}\|B\|_F \|Ce\|_2
    \ge \frac{1}{\sqrt{n}}\|Ae\|_2.
\end{equation}

To show this bound is tight, we construct factorizations approaching equality.
Let $R$ be a rotation such that $RAe = \|Ae\|_2\, e_1$. For $\lambda > 0$, define $D_\lambda = \mathrm{diag}(1, \lambda, \ldots, \lambda)$,
and set
\begin{equation}
    C_\lambda = D_\lambda^{-1} R A,
    \qquad
    B_\lambda = R^{-1} D_\lambda  = R^{T}D_\lambda.
\end{equation}
Then
\begin{equation}
    \|C_\lambda e\|_2
    = \|D_\lambda^{-1} R Ae\|_2
    = \|D_\lambda^{-1} e_1\|_2 \cdot \|Ae\|_2
    = \|Ae\|_2,
\end{equation}
and
\begin{equation}
    \|B_\lambda\|_F
    = \|R^{T} D_\lambda \|_F
    = \|D_\lambda\|_F
    = \sqrt{1 + \lambda^2 (n-1)}.
\end{equation}
As $\lambda \to 0$, we have $\|B_\lambda\|_F \to 1$, so $\|B_\lambda\|_F \|C_\lambda e\|_2 \to \|Ae\|_2$,
proving the infimum equals $\|Ae\|_2$.

Finally, since $Ae = (1,2,\ldots,n)^T$,
\begin{equation}
    \|Ae\|_2
    = \sqrt{\sum_{j=1}^n j^2}
    = \sqrt{\frac{n(n+1)(2n+1)}{6}}.
\end{equation}

\end{proof}

\begin{lemma}
\begin{equation}
    \inf_{BC=A}\frac{1}{\sqrt{n}}\|B\|_F \cdot \mathrm{sens}_{n,1}(C)
    \;\le\;
    \inf_{\substack{BC=A \\ C \ge 0}}
        \frac{1}{\sqrt{n}}\|B\|_F \|Ce\|_2
    \;\le\;
    \frac{1}{\sqrt{n}} \sum_{j=1}^{n} \sqrt{j}
    \;\sim\;
    \frac{2n}{3}.
\end{equation}
\end{lemma}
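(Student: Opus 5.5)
The plan is to prove the two inequalities separately. The first comes from identifying the sensitivity with $\|Ce\|_2$ for entrywise nonnegative strategy matrices. The second comes from plugging in the weighted diagonal factorization $C_{\mathrm{diag}}$ from Theorem~\ref{thm:full_batch_constants} and evaluating both factors explicitly.

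\emph{First inequality.} In the full-batch regime $b=1$, $k=n$, a single data point may participate in every iteration. So $\mathrm{sens}_{n,1}(C)=\sup\|C\Delta\|_F$, where $\Delta$ ranges over $n\times d$ matrices whose rows $\delta_j$ satisfy $\|\delta_j\|_2\le 1$ (using the same normalization as in the preceding lemma). Now suppose $C\ge 0$ entrywise. For each row $i$, the triangle inequality gives
\begin{equation}
\Big\|\sum_j c_{ij}\delta_j\Big\|_2\le \sum_j c_{ij}=(Ce)_i .
\end{equation}
Hence $\|C\Delta\|_F\le\|Ce\|_2$. Equality holds when all $\delta_j$ equal a common unit vector, as in the lower-bound lemma. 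Therefore $\mathrm{sens}_{n,1}(C)=\|Ce\|_2$ for every nonnegative $C$; this is the content of Proposition~E.1 in \citet{choquette2023amplified}, which we could cite instead. Restricting the infimum to the subclass $C\ge 0$ can only increase it, and this gives the first inequality.

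\emph{Second inequality.} We take $C=C_{\mathrm{diag}}=\mathrm{diag}\big((n-j+1)^{1/4}\big)_{j=1}^n$, which is nonnegative and invertible, and set $B=AC^{-1}$, so that $BC=A$.

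For the left factor, column $j$ of $A$ consists of $n-j+1$ ones, and right-multiplying by $C^{-1}$ scales it by $(n-j+1)^{-1/4}$. Thus
\begin{equation}
\|B\|_F^2=\sum_{j=1}^n (n-j+1)\,(n-j+1)^{-1/2}=\sum_{j=1}^n\sqrt{j}.
\end{equation}
For the sensitivity factor, $\|Ce\|_2^2=\sum_{j=1}^n (n-j+1)^{1/2}=\sum_{j=1}^n\sqrt{j}$.

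The product $\|B\|_F\|Ce\|_2$ therefore equals $\sum_{j=1}^n\sqrt{j}$, and dividing by $\sqrt n$ gives the stated bound. The asymptotic follows from comparing the sum with an integral: $\sum_{j\le n}\sqrt j=\tfrac23 n^{3/2}+O(n^{1/2})$, so the bound is asymptotic to $\tfrac{2n}{3}$.

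As an optional remark, the exponent $1/4$ is the optimal choice among diagonal $C=\mathrm{diag}(w_j)$. The objective is then
\begin{equation}
\Big(\sum_j (n-j+1)/w_j^2\Big)\Big(\sum_j w_j^2\Big),
\end{equation}
which by Cauchy--Schwarz is at least $\big(\sum_j\sqrt{n-j+1}\big)^2$, with equality when $w_j^4\propto n-j+1$. This is not needed for the lemma.

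The only step that requires care is the identification $\mathrm{sens}_{n,1}(C)=\|Ce\|_2$ for nonnegative $C$, specifically the worst-case alignment argument in dimension $d>1$. The triangle-inequality bound above handles it directly. The remaining computations are routine.
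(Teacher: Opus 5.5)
Your proposal is correct and follows essentially the same route as the paper: you identify $\mathrm{sens}_{n,1}(C)=\|Ce\|_2$ for entrywise nonnegative $C$ to get the first inequality, then evaluate the weighted diagonal factorization $C=\mathrm{diag}\big((n-j+1)^{1/4}\big)$, $B=AC^{-1}$, to obtain $\frac{1}{\sqrt n}\sum_{j=1}^n\sqrt j$. The differences are minor: you verify the sensitivity identity directly with a row-wise triangle inequality (sufficient for $C\ge 0$) where the paper cites Proposition~E.1 of \citet{choquette2023amplified}, and you also justify the $\tfrac{2n}{3}$ asymptotic by an integral comparison, which the paper leaves implicit.
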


\begin{proof}

The first inequality was proved in \citet{choquette2023amplified} (see Proposition E.1) under the weaker condition $C^T C \ge 0$.  
When $C \ge 0$, the sensitivity can be computed as the supremum over all participation patterns of the corresponding $\ell_2$ norm. In this case,
\begin{equation}
    \mathrm{sens}_{n,1}(C) = \|Ce\|_2,
\end{equation}
which yields the inequality.

For the second inequality, consider the following diagonal factorization:
\begin{equation}
    C = \mathrm{diag}\bigl(n^{1/4}, (n-1)^{1/4}, \ldots, 1^{1/4}\bigr),
    \qquad
    B_{i,j} = 
    \begin{cases}
        (n - j + 1)^{-1/4}, & i \ge j,\\[3pt]
        0, & \text{otherwise}.
    \end{cases}
\end{equation}

Then
\begin{equation}
    \|Ce\|_2 
    = \sqrt{\sum_{j=1}^{n} \sqrt{j}}, \quad
    \|B\|_F
    = \sqrt{
        \sum_{j=1}^{n} 
        (n - j + 1) \cdot (n - j + 1)^{-1/2}
    }
    =
    \sqrt{\sum_{j=1}^{n} \sqrt{j}}.
\end{equation}

Thus
\begin{equation}
    \frac{1}{\sqrt{n}} \, \|B\|_F \|Ce\|_2
    = 
    \frac{1}{\sqrt{n}} \sum_{j=1}^{n} \sqrt{j},
\end{equation}
giving the stated upper bound. Finally, by Cauchy--Schwarz inequality, this diagonal factorization is optimal among all diagonal choices of $C$.
\end{proof}

\NormalizedSensitivity*

\begin{proof}
We first note that the matrix $C_\lambda D^{-1}$ is no longer Toeplitz, and therefore Theorem~\ref{thm:sensitivity} does not apply directly. We instead repeat the argument of the proof of that theorem for this specific matrix.

For an element-wise nonnegative matrix $C$, the sensitivity can be written \citep{choquette2023amplified} as
\begin{equation}
    \mathrm{sens}(C)
    = \sup_{\pi} \sqrt{\sum_{i,j \in \pi} \langle C_{:,i}, C_{:,j} \rangle},
\end{equation}
where $\pi$ ranges over sets of at most $k$ column indices with minimum separation $b$.

Following the argument of \citet{kalinin2024}, we show that the maximizing set $\pi$ consists of the leftmost admissible columns. Specifically, if the optimal indices are not the leftmost ones (subject to $b$-separation), then the entire block can be shifted left without decreasing the objective. It suffices to show two facts:
\begin{enumerate}
    \item If $i+b<j$, then
    \[
        \langle (C_\lambda D^{-1})_{:,i}, (C_\lambda D^{-1})_{:,j} \rangle
        \le
        \langle (C_\lambda D^{-1})_{:,i}, (C_\lambda D^{-1})_{:,j-1} \rangle .
    \]
    \item If $i>1$ and $i+b \le j$, then
    \[
        \langle (C_\lambda D^{-1})_{:,i}, (C_\lambda D^{-1})_{:,j} \rangle
        \le
        \langle (C_\lambda D^{-1})_{:,i-1}, (C_\lambda D^{-1})_{:,j-1} \rangle .
    \]
\end{enumerate}
Since all columns are normalized to unit norm, these properties imply that the maximum sensitivity is attained by choosing the leftmost $k$ columns with separation $b$, i.e., indices $\{1,\, 1+b,\, \dots,\, 1+(k-1)b\}$.

We now verify the two claims. The inner product between the $i$th and $j$th columns of $C_\lambda D^{-1}$ is
\begin{equation}
    \langle (C_\lambda D^{-1})_{:,i}, (C_\lambda D^{-1})_{:,j} \rangle
    =
    \frac{1}{d_i d_j}
    \sum_{t=0}^{n-j} \lambda^{t} \lambda^{j-i+t},
\end{equation}
where
\begin{equation}
    d_i
    = \sqrt{\sum_{t=0}^{n-i} \lambda^{2t}}
    = \sqrt{\frac{1-\lambda^{2(n-i+1)}}{1-\lambda^2}} .
\end{equation}

Similarly, the inner product between columns $i$ and $j-1$ is
\begin{equation}
    \langle (C_\lambda D^{-1})_{:,i}, (C_\lambda D^{-1})_{:,j-1} \rangle
    =
    \frac{1}{d_i d_{j-1}}
    \sum_{t=0}^{n-j+1} \lambda^{t} \lambda^{j-i-1+t}.
\end{equation}
Taking the difference yields
\begin{align}
    &\langle (C_\lambda D^{-1})_{:,i}, (C_\lambda D^{-1})_{:,j-1} - (C_\lambda D^{-1})_{:,j} \rangle \\
    &\qquad
    = \frac{\lambda^{j-i-1}}{d_i d_{j-1}} d_{j-1}^2
      - \frac{\lambda^{j-i}}{d_i d_j} d_j^2
    = \frac{\lambda^{j-i-1}}{d_i} (d_{j-1} - \lambda d_j) \ge 0,
\end{align}
since the sequence $(d_j)$ is decreasing. This proves the first claim.

For the second claim, we consider
\begin{align}
    &\langle (C_\lambda D^{-1})_{:,i-1}, (C_\lambda D^{-1})_{:,j-1} \rangle
    - \langle (C_\lambda D^{-1})_{:,i}, (C_\lambda D^{-1})_{:,j} \rangle \\
    &\qquad
    = \lambda^{j-i}
    \left(
        \frac{d_{j-1}}{d_{i-1}} - \frac{d_j}{d_i}
    \right).
\end{align}
Thus it suffices to show that
\[
    \frac{d_{j-1} d_i}{d_{i-1} d_j} \ge 1.
\]
Using the explicit form of $d_i$, we obtain
\begin{equation}
    \frac{d_{j-1} d_i}{d_{i-1} d_j}
    =
    \sqrt{
        \frac{
            (1-\lambda^{2(n-j+2)})(1-\lambda^{2(n-i+1)})
        }{
            (1-\lambda^{2(n-i+2)})(1-\lambda^{2(n-j+1)})
        }
    }.
\end{equation}
Expanding the numerator and denominator, we see that this ratio is at least one if and only if
\begin{equation}
    \lambda^{2(n - j + 1)} + \lambda^{2(n - i + 2)}
    -
     \lambda^{2(n - j + 2)} - \lambda^{2(n - i + 1)}
    \ge 0.
\end{equation}
Rearranging terms gives
\begin{equation}
    (1 - \lambda^2)
    \bigl(
        \lambda^{2(n - j + 1)} - \lambda^{2(n - i + 1)}
    \bigr)
    \ge 0,
\end{equation}
which holds whenever $i \le j$. This concludes the proof.
\end{proof}

\rmseComparisonLemma*

\begin{proof}
The strategy matrix of \method is
\begin{equation}
    C_{\lambda} = \begin{pmatrix}
        1 & 0 & \cdots &0\\
        \lambda & 1 & \cdots & 0\\
        \vdots & \vdots & \ddots & \vdots\\
        \lambda^{n - 1} & \lambda^{n - 2} & \cdots & 1
    \end{pmatrix}.
\end{equation}

The squared norms of its columns are
\begin{equation}
    d_j^2 = \sum\limits_{i = 0}^{n - j} \lambda^{2i}
    = \frac{1 - \lambda^{2(n - j + 1)}}{1 -\lambda^2}. 
\end{equation}

The sensitivity of $C_{\lambda}$ under single participation is $d_1$.
For the factorization $A = B_{\lambda} C_{\lambda}$, where $A$ is the prefix-sum matrix, the corresponding left matrix is
\begin{equation}
    B_{\lambda} = \begin{pmatrix}
        1 & 0 & 0 & \cdots & 0\\
        1- \lambda & 1 & 0 & \dots &0\\
        1 - \lambda & 1- \lambda & 1 &\cdots & 0\\
        \vdots & \vdots & \vdots & \ddots & \vdots\\
        1 - \lambda & 1- \lambda & 1 - \lambda & \cdots & 1
    \end{pmatrix}.
\end{equation}

Its Frobenius norm can be written as
\begin{equation}
    \|B_{\lambda}\|_{F}^2 = (1 - \lambda)^2 \frac{(n - 1)n}{2} + n.
\end{equation}

The sensitivity of the normalized \method is equal to $1$ by construction.
The normalized strategy matrix is $\tilde{C}_{\lambda} = C_{\lambda} D^{-1}$, and the corresponding left matrix is
$\tilde{B}_{\lambda} = A D C_{\lambda}^{-1}$.
Its entries are
\begin{equation}
    (\tilde{B}_{\lambda})_{i,j} = \begin{cases}
        d_{j} - \lambda d_{j + 1} \quad & \text{if} \quad j < i,\\
        d_j \quad &\text{if} \quad j = i,\\
        0 \quad &\text{otherwise.}
    \end{cases}
\end{equation}
Therefore, its Frobenius norm satisfies
\begin{equation}
    \|\tilde{B}_{\lambda}\|_F^2
    = \sum\limits_{j = 1}^{n} d_j^2
    + \sum\limits_{j = 1}^{n - 1} (d_j - \lambda d_{j + 1})^2 (n - j).
\end{equation}

To show that the normalized \method has lower RMSE than \method, it is sufficient to prove
\begin{equation}
\label{ineq:rmse_norm_method}
    \sum\limits_{j = 1}^{n} d_j^2
    + \sum\limits_{j = 1}^{n - 1} (d_j - \lambda d_{j + 1})^2 (n - j)
    \le d_1^2 \left(n + (1 - \lambda)^2 \frac{(n - 1)n}{2}\right).
\end{equation}

First, we compute the sum of $d_j^2$:
\begin{equation}
\label{eq:d_j_squared_sum}
    \sum\limits_{j = 1}^{n} d_j^2
    = \sum\limits_{j = 1}^{n}\sum\limits_{i = 0}^{n - j }\lambda^{2i}
    = \sum\limits_{i = 0}^{n - 1}\lambda^{2i} (n - i)
    = n d_1^2 - \sum\limits_{j = 1}^{n - 1} j\lambda^{2j}.
\end{equation}

Next, we bound $d_j - \lambda d_{j + 1}$ as follows:
\begin{equation}
\begin{aligned}
    (d_j - \lambda d_{j + 1})^2
    &= d_j^2 +\lambda^2 d_{j + 1}^2 - 2\lambda d_j d_{j + 1}\\
    &= d_{j + 1}^2 + \lambda^{2(n - j)} +\lambda^2 d_{j + 1}^2 -2\lambda d_j d_{j + 1}\\
    &\le (1 -\lambda)^2 d_{j + 1}^2 + \lambda^{2(n - j)}.
\end{aligned}
\end{equation}

Thus,
\begin{align}
    \sum\limits_{j = 1}^{n - 1}(d_j - \lambda d_{j + 1})^2 (n - j)
    &\le (1-\lambda)^2 \sum\limits_{j = 1}^{n -1} d_{j + 1}^2 (n -j)
      + \sum\limits_{j = 1}^{n - 1} j \lambda^{2j} \\
    &\le d_1^2 (1 - \lambda)^2 \sum\limits_{j = 1}^{n - 1} (n- j)
      + \sum\limits_{j = 1}^{n - 1} j \lambda^{2j}\\
    &= d_1^2 (1 - \lambda)^2 \frac{n ( n - 1)}{2 }
      + \sum\limits_{j = 1}^{n - 1} j \lambda^{2j}.
\end{align}

Combining this bound with \eqref{eq:d_j_squared_sum} yields \eqref{ineq:rmse_norm_method}.
\end{proof}

\end{document}